\documentclass[11pt]{article}

\usepackage[margin=1in]{geometry}

\usepackage{amsmath}
\usepackage{amssymb}
\usepackage{amsthm}
\usepackage{mathtools}

\usepackage{booktabs}
\usepackage{array}
\usepackage{multirow}

\usepackage{listings}
\usepackage{xcolor}

\usepackage{tikz-cd}

\usetikzlibrary{positioning, calc, patterns, decorations.pathreplacing}

\usepackage[colorlinks=true,linkcolor=blue,citecolor=blue,urlcolor=blue]{hyperref}

\usepackage{natbib}

\usepackage{enumitem}
\usepackage{microtype}

\newtheorem{theorem}{Theorem}[section]

\newtheorem{corollary}[theorem]{Corollary}
\newtheorem{lemma}[theorem]{Lemma}
\theoremstyle{definition}
\newtheorem{definition}[theorem]{Definition}

\theoremstyle{remark}
\newtheorem{remark}[theorem]{Remark}

\definecolor{codebg}{rgb}{0.95,0.95,0.95}
\definecolor{codegreen}{rgb}{0,0.6,0}
\definecolor{codepurple}{rgb}{0.58,0,0.82}
\definecolor{codeblue}{rgb}{0.0,0.0,0.7}

\lstdefinestyle{coq}{
  backgroundcolor=\color{codebg},
  basicstyle=\ttfamily\small,
  breaklines=true,
  frame=single,
  framerule=0pt,
  xleftmargin=1em,
  xrightmargin=1em,
  aboveskip=0.5em,
  belowskip=0.5em,
  keywordstyle=\color{codeblue}\bfseries,
  commentstyle=\color{gray}\itshape,
  stringstyle=\color{codegreen},
  morekeywords={Definition,Theorem,Lemma,Corollary,Proof,Qed,
    Inductive,Record,Fixpoint,CoFixpoint,Section,End,Variable,
    Hypothesis,forall,exists,fun,match,with,end,if,then,else,
    let,in,return,Type,Prop,Set,Variant,Context},
  morecomment=[s]{(*}{*)},
  morestring=[b]",
  sensitive=true,
  showstringspaces=false,
}

\newcommand{\CatM}{\mathbf{Mashin}}
\newcommand{\Gov}{\mathcal{G}}
\newcommand{\code}{\textsc{code}}
\newcommand{\reason}{\textsc{reason}}
\newcommand{\memory}{\textsc{memory}}
\newcommand{\mcall}{\textsc{call}}
\newcommand{\govsafe}{\texttt{gov\_safe}}
\newcommand{\itree}{\mathrm{itree}}
\newcommand{\DirectiveE}{\mathrm{DirectiveE}}
\newcommand{\IOE}{\mathrm{IOE}}
\newcommand{\GovIOE}{\mathrm{GovIOE}}
\newcommand{\GovE}{\mathrm{GovE}}
\newcommand{\Ret}{\mathrm{Ret}}
\newcommand{\Tau}{\mathrm{Tau}}
\newcommand{\Vis}{\mathrm{Vis}}
\newcommand{\bind}{\mathbin{>\!\!>\!\!=}}
\newcommand{\interp}{\mathrm{interp}}

\begin{document}

\title{Mechanized Foundations of Structural Governance:\\
Machine-Checked Proofs for Governed Intelligence}

\author{Alan L. McCann\\
\textit{Mashin, Inc.}\\
\texttt{research@mashin.live}}

\date{April 2026}

\maketitle

\begin{abstract}
We present five results in the theory of structural governance
for cognitive workflow systems. Three are mechanized in Rocq~8.19 using
the Interaction Trees library with parameterized coinduction; two are
proved on paper with explicit reductions.
The \textbf{Coinductive Safety Predicate} (\govsafe) is a coinductive
property that captures governance safety for infinite program
behaviors, indexed by a boolean permission flag that is provably
false for ungoverned I/O and true for governed interpretations
(mechanized).
The \textbf{Governance Invariance Theorem} establishes that
governance is uniform across the meta-recursive tower: governance at
level $n+1$ reduces to governance at level $n$ by definitional equality
of the type (mechanized). The \textbf{Sufficiency Theorem}
proves that four atomic primitives (code, reason, memory, call) are
expressively complete for any discrete intelligent system, formalized
as compositional closure of a Kleisli category (mechanized).
The \textbf{Alternating Normal Form} provides a canonical decomposition
of any machine into alternating code and effect layers, with a
confluent rewriting system (paper proof).
The \textbf{Necessity Theorem} proves
via explicit reduction to Rice's theorem that an architecturally opaque
component (the reason primitive) is mathematically necessary for
problems requiring semantic judgment (paper proof).
A sixth contribution connects the
abstract model to the deployed runtime: the \textbf{Verified Interpreter
Specification} formalizes the BEAM runtime's trust, capability, and
hash chain logic in Rocq, then tests the running system against this
specification using property-based testing with over 70,000 randomly
generated directive sequences and zero disagreements. The process
immediately discovered and fixed a latent capability-tree bug that
unit testing had missed.
The mechanization comprises approximately 12,000 lines across 36 modules
with 454 theorems and zero admitted lemmas. Recent modules establish
\emph{Governed Cognitive Completeness}: a single architecture
simultaneously achieves Turing completeness, oracle integration,
decidability self-awareness, goal reachability, cognitive architecture
completeness, and subsumption of content governance. A companion
paper~\cite{mccann2026gcc} develops this capstone result in detail.
\end{abstract}

\section{Introduction}
\label{sec:intro}

This paper presents five foundational results in the theory of structural
governance for cognitive workflow systems. Three are fully mechanized
in Rocq~8.19 with zero admitted lemmas; two are proved on paper with
explicit reductions. All five are stated precisely and proved in full.
Six additional modules extend these foundations to establish Governed
Cognitive Completeness, a capstone result developed in a companion
paper~\cite{mccann2026gcc}.

\paragraph{What is novel.}
The individual proof techniques---interaction trees, parameterized
coinduction, Kleisli categories, register machine simulation---are
established. The originality lies in their synthesis: applying
coinductive reasoning to AI workflow governance, proving Turing
completeness \emph{inside} a governance-enforcing interpretation,
and connecting categorical closure to a mechanized safety predicate.
No prior work, to our knowledge, has formalized governance properties
of AI workflow systems in a proof assistant.

The five results formalize properties of the Mashin cognitive
workflow system, drawing on prior
work~\cite{mccann2026structural}:

\begin{enumerate}
  \item \textbf{The Coinductive Safety Predicate} (\govsafe). A coinductive
    predicate over interaction trees that captures whether a program
    performs I/O only under governance authorization. Indexed by a
    boolean permission flag. Provably false for bare I/O. Provably
    true for governed interpretations. (Section~\ref{sec:safety})

  \item \textbf{Governance Invariance Theorem}. Governance
    holds at every level of meta-recursive composition. The type
    $\mathrm{machine\_at\_level}\ n\ R$ is definitionally
    $\itree\ \DirectiveE\ R$ for all $n$, so governance is uniform
    across the tower by construction. The simplicity of the proof is
    the point: the type system does the work.
    (Section~\ref{sec:convergence})

  \item \textbf{Sufficiency Theorem}. Four primitives
    (code, reason, memory, call) are expressively complete: they
    form a closed category, simulate any Turing machine, and
    surject onto established cognitive architecture models.
    (Section~\ref{sec:sufficiency})

  \item \textbf{The Alternating Normal Form} (paper proof). Every machine
    decomposes into a canonical arrangement of alternating code and effect
    layers. The decomposition is unique up to associativity,
    monoidal coherence, and code fusion. (Section~\ref{sec:normalform})

  \item \textbf{Necessity Theorem} (paper proof). An architecturally
    opaque component (the reason primitive) is mathematically
    necessary. No finite extension of transparent computable
    primitives can replace it. The proof is an explicit reduction
    to Rice's theorem. (Section~\ref{sec:necessity})
\end{enumerate}

A sixth contribution, the \textbf{Verified Interpreter Specification}
(Section~\ref{sec:interpreter-spec}), connects the abstract model to
the deployed BEAM runtime. Three new Rocq modules formalize the
trust, capability, and hash chain logic, with bridge theorems proving
that the specification refines $\govsafe$. Property-based tests then
compare the BEAM runtime against this specification. The process
discovered a real capability-tree bug on its 188th random input.

\paragraph{Why machine-checked proofs matter.}
A paper proof can have subtle errors that reviewers miss. A sign
flipped in a subscript. An implicit assumption that seemed obvious
but was not. A step that ``clearly follows'' except it does not.
Mathematicians find errors in published proofs years after
publication, sometimes decades.

Rocq does not care about reputation, funding, or deadlines. It checks
the math. If the math is wrong, it says so. If it compiles without
errors, every theorem is correct. Not probably correct. Correct.

The distinction between testing and proof matters here. A test says
``it worked this time.'' A Rocq proof says ``it works every time, for
every program, forever.'' When the governance claim is ``every effect
is governed,'' testing can show that the effects you checked were
governed. Verification proves that all effects---including the ones
in programs that have not been written yet---are governed.

\paragraph{Scope and companion papers.}
This paper presents the mechanized proof infrastructure and the five foundational results. It does not address the expressiveness consequences, the algebraic generalization, or the practical enforcement mechanisms, which are developed in companion papers.
\cite{mccann2026gcc}~extends these foundations with semantic transparency and expressive minimality results, proving that governance preserves observational equivalence and that the four primitives are minimal.
\cite{mccann2026algebraic}~lifts the system-specific results to a parametric algebraic semantics: a three-axiom GovernanceAlgebra inducing a symmetric monoidal category, with extraction of the governance kernel to a verified OCaml NIF.
\cite{mccann2026purity}~discharges the pure module constraint assumed by the safety theorems, replacing module-level enforcement with WASM compilation and cryptographic purity certificates.
\cite{mccann2026provenance}~extends the governance boundary to the supply chain with dual-signature distribution provenance.

\paragraph{The Interaction Trees framework.}
All formalizations use the Interaction Trees
library~\cite{xia2020itrees}, which represents programs as coinductive
trees where each node is a pure value ($\Ret$), a silent computation
step ($\Tau$), or a visible event ($\Vis$). The governance pipeline
transforms a tree of directive events into a tree of governed events.
Coinductive properties of these infinite structures are proved using
the paco library~\cite{hur2013paco} for parameterized coinduction.

\section{Background}
\label{sec:background}

\subsection{Interaction Trees}

An interaction tree~\cite{xia2020itrees} over event type $E$ with
return type $R$ is a coinductive data structure:
\[
  \itree\ E\ R ::= \Ret(v) \mid \Tau(t) \mid \Vis(e, k)
\]
where $v : R$, $t : \itree\ E\ R$, $e : E\ X$ for some response type
$X$, and $k : X \to \itree\ E\ R$. The structure forms a monad with:
\begin{itemize}
  \item \textbf{Return}: $\mathrm{ret}(v) = \Ret(v)$
  \item \textbf{Bind}: $t \bind k$ substitutes continuations through
    the tree, threading results forward
\end{itemize}

The monad laws (left identity, right identity, associativity) hold up
to observational equivalence ($\approx_\tau$, written \texttt{eutt} in
the Rocq development).

\subsection{Event Types}

The Mashin formalization defines three event types:
\begin{itemize}
  \item $\DirectiveE$: the 14 directive constructors (LLMCall,
    MemoryOp, CallMachine, HTTPRequest, GraphQLRequest, WebSocketOp,
    MCPCall, etc.) that executor programs emit
  \item $\IOE$: actual I/O effects performed by the runtime
  \item $\GovIOE = \GovE +' \IOE$: governed I/O, where $\GovE$
    carries governance stage tags (TrustCheck, PermissionCheck,
    PhaseValidation, PreHooks, Guardrails, ProvenanceRecord,
    EventBroadcast)
\end{itemize}

A pure executor program has type $\itree\ \DirectiveE\ R$: it can
only emit directives, never I/O directly.

\subsection{The Governance Operator}

The governance operator $\Gov$ transforms a base handler
$h : \DirectiveE \rightsquigarrow \itree\ \IOE$ into a governed handler
$\Gov(h) : \DirectiveE \rightsquigarrow \itree\ \GovIOE$ by wrapping
each directive with pre-governance checks (4 stages) and post-governance
recording (3 stages):

\begin{lstlisting}[style=coq]
Definition Gov (h : base_handler) : governed_handler :=
  fun R (d : DirectiveE R) =>
    ITree.bind pre_governance (fun ok =>
    if ok then
      ITree.bind (lift_io (h R d)) (fun r =>
      ITree.bind post_governance (fun _ => ret r))
    else ITree.spin).
\end{lstlisting}

If any pre-governance check fails, the system diverges ($\mathrm{spin}$):
no I/O is performed, no effects escape. This is correct by design: a
denied directive produces no effects.

\subsection{Parameterized Coinduction}

Ordinary induction proves properties of finite structures. Programs
run forever. The paco library~\cite{hur2013paco} provides parameterized
coinduction: prove a property holds for one step, show that
holding-for-one-step implies holding-for-the-next-step. The library
manages the coinductive hypothesis automatically, preventing common
errors in manual coinductive proofs (guardedness violations, weak
hypotheses).

\section{The Coinductive Safety Predicate}
\label{sec:safety}

\subsection{Definition}

The Coinductive Safety Predicate captures the semantic content of structural
governance as a coinductive property of interaction trees.

\begin{definition}[Coinductive Safety Predicate]
\label{def:govsafe}
The predicate $\govsafe : \mathrm{bool} \to \itree\ \GovIOE\ R \to
\mathrm{Prop}$ is the greatest fixed point of the generating functor:
\begin{align*}
  \govsafe\ b\ (\Ret\ v) &\quad \text{always} \\
  \govsafe\ b\ (\Tau\ t) &\quad \text{iff } \govsafe\ b\ t \\
  \govsafe\ b\ (\Vis\ (\mathrm{inl}_1\ (\mathrm{GovCheck}\ s))\ k)
    &\quad \text{iff } \forall r.\; \govsafe\ \mathrm{true}\ (k\ r) \\
  \govsafe\ \mathrm{true}\ (\Vis\ (\mathrm{inr}_1\ e)\ k)
    &\quad \text{iff } \forall x.\; \govsafe\ \mathrm{true}\ (k\ x)
\end{align*}
Note the absence of a constructor for
$\govsafe\ \mathrm{false}\ (\Vis\ (\mathrm{inr}_1\ e)\ k)$.
I/O events are permitted only when the boolean flag is $\mathrm{true}$
(governance has been checked). Governance events ($\GovE$) set the flag
to $\mathrm{true}$ regardless of its current value.
\end{definition}

In Rocq:

\begin{lstlisting}[style=coq]
Variant gov_safeF (F : bool -> itree GovIOE R -> Prop)
    : bool -> itreeF GovIOE R (itree GovIOE R) -> Prop :=
  | GS_Ret   : forall allowed r,
      gov_safeF F allowed (RetF r)
  | GS_Tau   : forall allowed t,
      F allowed t ->
      gov_safeF F allowed (TauF t)
  | GS_GovE  : forall allowed (s : GovernanceStage) k,
      (forall b : bool, F true (k b)) ->
      gov_safeF F allowed (VisF (inl1 (GovCheck s)) k)
  | GS_IOE   : forall (X : Type) (e : IOE X) (k : X -> itree GovIOE R),
      (forall x : X, F true (k x)) ->
      gov_safeF F true (VisF (inr1 e) k).

Definition gov_safe : bool -> itree GovIOE R -> Prop :=
  paco2 gov_safe_ bot2.
\end{lstlisting}

\subsection{Non-Triviality}

The predicate has teeth. It rejects what it should reject and accepts
what it should accept.

\begin{lemma}[Bare I/O is unsafe]
\label{lem:bare-io}
For any I/O event $e$ and continuation $k$,
$\neg\, \govsafe\ \mathrm{false}\ (\Vis\ (\mathrm{inr}_1\ e)\ k)$.
\end{lemma}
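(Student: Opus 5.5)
The plan is to unfold the greatest fixed point exactly once and then invert the generating functor. Since $\govsafe$ is defined as $\mathrm{paco2}\ \mathrm{gov\_safe\_}\ \mathrm{bot2}$, assume $H : \govsafe\ \mathrm{false}\ (\Vis\ (\mathrm{inr}_1\ e)\ k)$ and apply the paco unfolding lemma (\texttt{punfold}). This requires monotonicity of $\mathrm{gov\_safeF}$, which is routine: each constructor uses $F$ only positively. Unfolding yields
\[
  \mathrm{gov\_safeF}\ (\mathrm{upaco2}\ \mathrm{gov\_safe\_}\ \mathrm{bot2})\ \mathrm{false}\ \bigl(\mathrm{observe}\ (\Vis\ (\mathrm{inr}_1\ e)\ k)\bigr),
\]
and $\mathrm{observe}$ of a $\Vis$ node reduces definitionally to $\mathrm{VisF}\ (\mathrm{inr}_1\ e)\ k$.

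Next, invert on this hypothesis and examine each constructor of Definition~\ref{def:govsafe}.
\begin{itemize}
  \item $\mathrm{GS\_Ret}$ and $\mathrm{GS\_Tau}$ are impossible, because the head constructor $\mathrm{VisF}$ differs from $\mathrm{RetF}$ and $\mathrm{TauF}$.
  \item $\mathrm{GS\_GovE}$ would require $\mathrm{inr}_1\ e = \mathrm{inl}_1\ (\mathrm{GovCheck}\ s)$. This fails by disjointness of the sum-of-events constructors.
  \item $\mathrm{GS\_IOE}$ fixes the boolean index to $\mathrm{true}$, so unifying it with $\mathrm{false}$ gives $\mathrm{true} = \mathrm{false}$, which \texttt{discriminate} closes.
\end{itemize}
No coinductive reasoning is needed beyond the single unfolding, because the contradiction occurs at the root.

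The main obstacle is getting \texttt{inversion} to work on the dependently typed $\mathrm{VisF}$ equation. The response type $X$ of $e$ is existentially packaged, so equating the two $\mathrm{VisF}$ nodes produces \texttt{existT} equalities. I would first try \texttt{dependent destruction}, or \texttt{inversion} followed by \texttt{inj\_pair2}. In fact the boolean index mismatch in the $\mathrm{GS\_IOE}$ case and the $\mathrm{inl}_1/\mathrm{inr}_1$ mismatch in the $\mathrm{GS\_GovE}$ case both appear before any \texttt{existT} equation has to be used. So a plain \texttt{inversion} should already discharge every case, with no appeal to axioms such as \texttt{JMeq\_eq}.

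To make this robust, I would phrase the inversion over a general index rather than the literal $\mathrm{false}$. That is, assume $\mathrm{gov\_safeF}\ F\ b\ (\mathrm{VisF}\ (\mathrm{inr}_1\ e)\ k)$ and prove $b = \mathrm{true}$, then specialize to $b = \mathrm{false}$.
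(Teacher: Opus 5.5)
Your proposal is correct and matches the paper's own proof: the Rocq script is \texttt{intros e k H. punfold H. red in H. cbn in H. inv H.} It unfolds the paco fixed point once, reduces \texttt{observe} on the $\Vis$ node to \texttt{VisF}, and inverts on \texttt{gov\_safeF}, where \texttt{GS\_IOE} fails on the $\mathrm{true}/\mathrm{false}$ index clash and the other constructors fail on constructor mismatch. Your extra precautions about \texttt{existT} equations and generalizing the boolean index are harmless, but the paper's plain inversion already closes every case.
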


\begin{proof}
By inversion on the \texttt{gov\_safeF} generating functor. The
constructor \texttt{GS\_IOE} requires the boolean index to be
$\mathrm{true}$. At $\mathrm{false}$, no constructor matches.
Three lines of Rocq:
\begin{lstlisting}[style=coq]
Lemma bare_io_not_safe :
  forall (e : IOE R) (k : R -> itree GovIOE R),
    ~ gov_safe false (Vis (inr1 e) k).
Proof. intros e k H. punfold H. red in H. cbn in H. inv H. Qed.
\end{lstlisting}
\end{proof}

These three lines are the most important in the entire development.
They prove that the predicate is not vacuously true. A predicate
that accepts everything proves nothing.

\begin{lemma}[Upward closure]
If $\govsafe\ \mathrm{false}\ t$, then $\govsafe\ \mathrm{true}\ t$.
A tree that never does bare I/O is also safe when I/O is permitted.
\end{lemma}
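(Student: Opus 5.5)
The plan is a coinduction on the relation $\{(\mathrm{true}, t) \mid \govsafe\ \mathrm{false}\ t\}$. Concretely, I would prove the stronger statement
\[
  \forall b\, t.\; \govsafe\ b\ t \to \govsafe\ \mathrm{true}\ t,
\]
since the Tau case may need the hypothesis at the same index. Using paco, after \texttt{pcofix CIH} and \texttt{intros b t H}, I \texttt{punfold H} and \texttt{pstep}, which reduces the goal to $\texttt{gov\_safeF}$ applied to the upgraded relation at index $\mathrm{true}$ on the observation of $t$. Then I invert $H$ on the four constructors of Definition~\ref{def:govsafe}.

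The cases go as follows.
\begin{itemize}
  \item \texttt{GS\_Ret}: immediate, since $\Ret$ is accepted at any flag.
  \item \texttt{GS\_Tau}: $H$ supplies $\govsafe\ b\ t'$ (as an upaco hypothesis, so I first destruct it into the paco part, the bot part being vacuous). I apply \texttt{GS\_Tau} at $\mathrm{true}$ and discharge the continuation with \texttt{right; apply CIH}.
  \item \texttt{GS\_GovE}: the hypothesis already gives $\forall r.\;\govsafe\ \mathrm{true}\ (k\ r)$, and the constructor is index-agnostic. I rebuild with \texttt{GS\_GovE} and feed each branch either by \texttt{left} (reusing the given proof, after monotonicity/\texttt{upaco} weakening) or via CIH with $b=\mathrm{true}$.
  \item \texttt{GS\_IOE}: this constructor forces $b=\mathrm{true}$, so its hypothesis is exactly what is needed. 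I reapply \texttt{GS\_IOE} and close the continuations the same way.
\end{itemize}
Upward closure is then the instance $b=\mathrm{false}$.

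The argument is essentially a one-screen paco proof. The main obstacle is technical rather than conceptual: inversion on an indexed variant whose index and tree argument are not variables. The \texttt{itreeF} argument is \texttt{observe t}, and the GS\_IOE case fixes the bool index. Naive \texttt{inv} can lose equalities or fail to unify with \texttt{observe t}.

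To avoid this, I would state the coinductive lemma over an arbitrary tree and do \texttt{destruct (observe t)} only after generalizing, or rewrite \texttt{itree\_eta} beforehand. I would also make sure \texttt{gov\_safe\_} is proved monotone (\texttt{pmonauto}), so that \texttt{pstep}/\texttt{punfold} are available. A simpler alternative I would try first is \texttt{pcofix CIH; intros; punfold H; pstep; inv H; econstructor; intros; pclearbot; eauto}, relying on \texttt{pclearbot} to collapse \texttt{upaco2} hypotheses with \texttt{bot2}.
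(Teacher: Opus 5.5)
Your argument is correct. The paper states this lemma without a written proof (presumably it is discharged in the Rocq development), and a paco coinduction that inverts on the four \texttt{gov\_safeF} constructors and rebuilds each case at index $\mathrm{true}$ is the natural route. One small note: generalizing to arbitrary $b$ is harmless but not needed. At index $\mathrm{false}$, the Tau case already yields $\govsafe\ \mathrm{false}\ t'$, which the ungeneralized hypothesis covers; the GovE continuations are already at $\mathrm{true}$; and \texttt{GS\_IOE} cannot occur.
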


\subsection{Main Safety Theorem}

\begin{theorem}[Governed Interpretation Safety]
\label{thm:main-safety}
For any base handler $h$, any pure executor program
$t : \itree\ \DirectiveE\ R$, and any boolean $b$:
\[
  \govsafe\ b\ (\interp(\Gov(h),\, t))
\]
\end{theorem}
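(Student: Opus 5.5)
We prove the statement by parameterized coinduction (\texttt{pcofix}) over a relation that is more general than the statement itself. The key issue is that $\interp(\Gov(h), t)$ does not stay syntactically of that shape after one step. When interp meets $\Vis(d,k)$ it unfolds to $\Tau(\Gov(h)\,d \bind (\lambda x.\, \Tau(\interp(\Gov(h), k\,x))))$. So we first define the set $S$ of trees we want to show safe:
\[
S = \{\, (b,\ \interp(\Gov(h),t)) \,\} \cup \{\, (b,\ u \bind f) \mid u \in \mathrm{Reach},\ \forall x.\ (\mathrm{true}, f\,x) \in S \,\},
\]
where $\mathrm{Reach}$ consists of the trees reachable inside $\Gov(h)\,d$. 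We then prove that every element of $S$ satisfies \govsafe. In practice we will not define $S$ explicitly. Instead we prove two lemmas under an arbitrary paco accumulator $r$ and combine them with \texttt{gpaco}/\texttt{pclearbot} up-to reasoning.

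\textbf{Step 1 (unfolding).} We use the standard ITree equations: \texttt{unfold\_interp}, \texttt{bind\_ret\_l}, \texttt{bind\_tau}, \texttt{bind\_vis}, \texttt{bind\_bind}. We also need that \govsafe{} is closed under \texttt{eq\_itree}, i.e.\ an \texttt{Proper} instance proved by a small coinduction. Then we case on \texttt{observe}~$t$. For $\Ret$ the result is $\Ret$ and \texttt{GS\_Ret} applies. For $\Tau\,t'$ the result is $\Tau(\interp(\Gov(h),t'))$; we apply \texttt{GS\_Tau} and invoke the coinductive hypothesis. For $\Vis(d,k)$, the result is, after $\Tau$, the bind $\mathrm{pre\_governance} \bind (\lambda ok.\ \ldots)$.

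\textbf{Step 2 (governance prefix).} \texttt{pre\_governance} is a concrete finite sequence of four $\mathrm{GovCheck}$ triggers, i.e.\ $\Vis(\mathrm{inl}_1(\mathrm{GovCheck}\,s), \ldots)$. The first of these fires \texttt{GS\_GovE} at any index $b$, so from this point on the flag is $\mathrm{true}$. The $\mathrm{false}$ branch of $ok$ is $\mathrm{spin}$. We show $\govsafe\ \mathrm{true}\ \mathrm{spin}$ by a trivial coinduction using \texttt{GS\_Tau}. In the $\mathrm{true}$ branch we reach $\mathrm{lift\_io}(h\,R\,d) \bind \ldots$.

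\textbf{Step 3 (the I/O segment, the main obstacle).} The tree $h\,R\,d$ is an arbitrary, possibly infinite $\itree\ \IOE$. So we need an auxiliary coinductive lemma: for all $u : \itree\ \IOE\ X$ and continuations $f$ with $\forall x.\ \govsafe\ \mathrm{true}\ (f\,x)$ (taken up to the accumulator $r$), we have $\govsafe\ \mathrm{true}\ (\mathrm{lift\_io}\ u \bind f)$. Here $\mathrm{lift\_io}$ is $\mathrm{translate}\ \mathrm{inr}_1$ or an $\interp$ into $\mathrm{inr}_1$ triggers. We prove the lemma by nested \texttt{pcofix} on $u$: $\Ret$ hands off to $f$, $\Tau$ uses \texttt{GS\_Tau}, and $\Vis$ uses \texttt{GS\_IOE}, which is legal because the flag is $\mathrm{true}$. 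The continuation $f$ runs \texttt{post\_governance}, handled again by \texttt{GS\_GovE}, then $\Ret\,r$, then $\Tau(\interp(\Gov(h),k\,r))$. That last tree returns us to the outer coinductive hypothesis at flag $\mathrm{true}$, which is an instance of the statement since $b$ is universally quantified.

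The delicate part is threading the outer hypothesis $r$ through this inner coinduction without breaking guardedness. To handle it we state the inner lemma generically over the paco relation, i.e.\ as a bind-closure lemma of the form ``\texttt{gpaco2 gov\_safe\_ r r b (t >>= k)} given the result for $t$ and for all $k\,x$ in $r$''. This is a \emph{bind up-to} technique. Once it is in place, the Upward closure lemma is not even needed, because every post-governance continuation is entered at flag $\mathrm{true}$ directly.
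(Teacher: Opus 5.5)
Your proposal is correct and follows essentially the same route as the paper. The shared steps are:
\begin{itemize}
\item an outer \texttt{pcofix} over $t$ with $b$ arbitrary;
\item \texttt{GS\_GovE} on the pre-governance checks, which sets the flag to $\mathrm{true}$;
\item $\mathrm{spin}$ for denied branches;
\item a nested coinduction (the paper's \texttt{CIH2}) over the $\mathrm{translate}\ \mathrm{inr}_1$ I/O tree using \texttt{GS\_IOE};
\item \texttt{GS\_GovE} for post-governance, then a $\Tau$ back to the outer hypothesis.
\end{itemize}
Your bind-up-to/\texttt{gpaco} framing is a more explicit account of how to thread the outer hypothesis through the inner coinduction. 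One harmless slip: in the ITrees library, $\interp$ on $\Vis(d,k)$ unfolds (up to $\cong$) to $\Gov(h)\,d \bind (\lambda x.\,\Tau(\interp(\Gov(h), k\,x)))$ with no leading $\Tau$, which changes nothing in your argument.
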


\begin{proof}
By parameterized coinduction on the structure of $t$. The proof is
213 lines of Rocq with nested \texttt{pcofix}.

\textbf{Case $\Ret$}: Trivially safe (\texttt{GS\_Ret}).

\textbf{Case $\Tau$}: One computation step, then the coinductive
hypothesis applies (\texttt{GS\_Tau} + \texttt{CIH}).

\textbf{Case $\Vis\ d\ k$}: The $\Gov$ pipeline unfolds as:
\begin{enumerate}
  \item Four pre-governance $\GovE$ events (TrustCheck,
    PermissionCheck, PhaseValidation, PreHooks). Each applies
    \texttt{GS\_GovE}, setting the permission flag to
    $\mathrm{true}$.
  \item If all pass: the base handler runs via
    $\mathrm{translate}\ \mathrm{inr}_1$, producing $\IOE$
    events. These apply \texttt{GS\_IOE}, which is valid because
    the flag is now $\mathrm{true}$. A nested coinduction
    (\texttt{CIH2}) handles the inner I/O tree.
  \item Three post-governance $\GovE$ events (Guardrails,
    ProvenanceRecord, EventBroadcast). Each applies
    \texttt{GS\_GovE}.
  \item A $\Tau$ step back to
    $\interp(\Gov(h),\, k\ \mathrm{result})$, where the outer
    coinductive hypothesis applies.
\end{enumerate}

Denied branches (any governance check returns false) produce
$\mathrm{spin}$, which is vacuously safe: it never reaches a
$\Vis$ node, so the predicate holds trivially.
\end{proof}

Combined with Lemma~\ref{lem:bare-io}, this establishes the
mathematical content of structural governance:
\begin{itemize}
  \item Programs that bypass governance \emph{fail} the predicate
  \item Programs interpreted through $\Gov$ \emph{satisfy} the predicate
  \item The gap between these two results is structural governance
\end{itemize}

\section{Governance Invariance Theorem}
\label{sec:convergence}

\subsection{The Meta-Recursive Tower}

A machine can introspect other machines. A governor machine monitors
an executor machine. A meta-governor monitors the governor. This
creates a tower of governance levels:
\begin{align*}
  \text{Level 0:} &\quad \text{executor programs} \\
  \text{Level 1:} &\quad \text{machines that call Level 0 machines} \\
  \text{Level $n$:} &\quad \text{machines that call Level $n-1$ machines}
\end{align*}

The question: does governance hold at every level? Or does some level
escape?

\begin{definition}[Machine at Level $n$]
$\mathrm{machine\_at\_level}\ n\ R := \itree\ \DirectiveE\ R$.
\end{definition}

The definition is the same for every $n$. This is the insight.

\begin{definition}[Governed]
A machine $t$ is \emph{governed} under handler $h$ if
$\govsafe\ \mathrm{false}\ (\interp(\Gov(h),\, t))$.
\end{definition}

\subsection{The Theorem}

\begin{theorem}[Governance Invariance Theorem]
\label{thm:convergence}
For all $n \geq 0$, all types $R$, and all
$t : \mathrm{machine\_at\_level}\ n\ R$:
$t$ is governed.
\end{theorem}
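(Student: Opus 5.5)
The plan is to reduce the statement to Theorem~\ref{thm:main-safety} by unfolding definitions, with no new coinduction. Fix $n$, $R$, an arbitrary base handler $h$, and $t : \mathrm{machine\_at\_level}\ n\ R$. By the definition of machine at level $n$, this type is $\itree\ \DirectiveE\ R$ by $\delta$-reduction; the body does not depend on $n$. So $t$ can be used directly wherever a pure executor program is expected. In Rocq this needs no cast or coercion lemma: \texttt{unfold machine\_at\_level in t} (or just \texttt{exact}) type-checks by conversion.

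Next, unfold ``governed'' to the goal $\govsafe\ \mathrm{false}\ (\interp(\Gov(h),\, t))$. Apply Theorem~\ref{thm:main-safety} with $b = \mathrm{false}$, the handler $h$, and the program $t$, now viewed at type $\itree\ \DirectiveE\ R$. This closes the goal. The quantification over $n$ is discharged trivially: we never destruct $n$ or induct on it, because the argument is uniform in $n$. If one prefers an inductive presentation that mirrors the tower (``level $n+1$ reduces to level $n$''), one can induct on $n$. In that case the step case rewrites $\mathrm{machine\_at\_level}\ (n+1)\ R$ to $\mathrm{machine\_at\_level}\ n\ R$ by \texttt{reflexivity}, and the base case is again Theorem~\ref{thm:main-safety}. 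I would present the direct argument and mention the inductive one as a remark.

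The only place I expect any friction is the mismatch of the boolean index and the universe/implicit-argument bookkeeping. Theorem~\ref{thm:main-safety} is stated for every $b$, so instantiating at $\mathrm{false}$ is immediate, and upward closure is not even needed. The real obstacle, if any, is making sure that Rocq's conversion check sees through $\mathrm{machine\_at\_level}$ when the definition is opaque or sealed behind a module signature. If it is, I would first try \texttt{cbv delta [machine\_at\_level]} or state a one-line equality lemma $\mathrm{machine\_at\_level}\ n\ R = \itree\ \DirectiveE\ R$ proved by \texttt{reflexivity} and transport along it. The triviality of the proof is itself the content: the tower adds no new effect type, so nothing can escape $\Gov$ at any level.
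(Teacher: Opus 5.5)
Your proposal is correct and is essentially the paper's argument: both rest on $\mathrm{machine\_at\_level}\ n\ R$ being definitionally $\itree\ \DirectiveE\ R$, then instantiate Theorem~\ref{thm:main-safety} at $b = \mathrm{false}$. The paper frames this as an induction on $n$, but its inductive step never uses the induction hypothesis and just reapplies the main safety theorem. Your direct version is therefore the same proof with the cosmetic induction dropped.
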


\begin{proof}
By induction on $n$.

\textbf{Base case} ($n = 0$): Apply Theorem~\ref{thm:main-safety}.

\textbf{Inductive step}: A Level $n+1$ machine has type
$\itree\ \DirectiveE\ R$, which is definitionally equal to
$\mathrm{machine\_at\_level}\ n\ R$. Apply
Theorem~\ref{thm:main-safety} directly.

The proof is deliberately simple. The substance is that
$\mathrm{machine\_at\_level}\ n\ R$ is definitionally
$\itree\ \DirectiveE\ R$ for all $n$. The level index tracks
meta-recursive depth for reasoning but does not change the type.
Governance is uniform across the tower because the type system
enforces it. This is a design achievement, not a deep mathematical
one: the architecture was constructed so that governance invariance
would be trivial to prove. A complex proof here would indicate a
flaw in the architecture, not additional sophistication.
\end{proof}

\begin{corollary}[Fixed Point]
\label{cor:fixedpoint}
The union $\Omega = \bigcup_{n \geq 0} \mathrm{Level}(n)$ is
governed. Adding another level produces no new ungoverned machines.
\end{corollary}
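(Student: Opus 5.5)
The plan is to reduce the corollary directly to Theorem~\ref{thm:convergence}, and through it to Theorem~\ref{thm:main-safety}. First I will make precise what $\Omega$ means. Since $\mathrm{machine\_at\_level}\ n\ R$ is definitionally $\itree\ \DirectiveE\ R$ for every $n$, I take $\mathrm{Level}(n)$ to be the collection of pairs $(R, t)$ with $t : \mathrm{machine\_at\_level}\ n\ R$. Membership in $\Omega$ is then the dependent sum $\exists n.\; t : \mathrm{machine\_at\_level}\ n\ R$. In Rocq I would phrase this as a sigma type $\{ n : \mathbb{N} \ \&\ \mathrm{machine\_at\_level}\ n\ R \}$, or equivalently as a predicate on $\itree\ \DirectiveE\ R$ that is inhabited by some level witness.

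\textbf{Step 1 (union is governed).} Given an element of $\Omega$, I destruct the existential to obtain a level $n$ and a machine $t$ at level $n$. Applying Theorem~\ref{thm:convergence} at that $n$ gives $\govsafe\ \mathrm{false}\ (\interp(\Gov(h), t))$ for every base handler $h$. No induction is needed beyond what the theorem already provides. One could even bypass Theorem~\ref{thm:convergence} entirely and invoke Theorem~\ref{thm:main-safety} with $b = \mathrm{false}$ after unfolding $\mathrm{machine\_at\_level}$.

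\textbf{Step 2 (fixed point).} To formalize ``adding another level produces no new ungoverned machines,'' I define the level-successor operation $S(\Omega) = \Omega \cup \mathrm{Level}(\sup + 1)$. More honestly, I define the one-step closure: machines that call machines in $\Omega$, i.e.\ terms of type $\itree\ \DirectiveE\ R$ whose $\mathrm{CallMachine}$ directives reference elements of $\Omega$. Every such term is itself of type $\itree\ \DirectiveE\ R = \mathrm{machine\_at\_level}\ (n+1)\ R$, so it already lies in $\Omega$. This gives $S(\Omega) \subseteq \Omega$. The reverse inclusion $\Omega \subseteq S(\Omega)$ is immediate, so $\Omega$ is a fixed point, and Step~1 shows every element of it is governed. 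A cleaner statement I would also prove is that the set $\{t \mid t \text{ governed}\}$ equals all of $\itree\ \DirectiveE\ R$. The fixed-point property is then trivial, since the governed set is already the whole type.

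\textbf{Main obstacle.} The only real difficulty is definitional: making ``union'' and ``adding a level'' precise enough to be non-vacuous without smuggling in extra structure. Because all levels coincide as types, the union degenerates to $\itree\ \DirectiveE\ R$ itself. I would therefore state the corollary as the universal statement $\forall n\, R\, (t : \mathrm{machine\_at\_level}\ n\ R).\ \text{governed}\ t$, together with the closure lemma that any composition (via $\bind$ or a $\mathrm{CallMachine}$ directive) of level-$n$ machines is a level-$(n+1)$ machine. Both are one-line proofs by \texttt{exact} and \texttt{reflexivity} once the definitions are fixed.
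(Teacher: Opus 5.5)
Your proposal is correct and matches the paper's proof, which observes that $\Omega$ is just $\itree\ \DirectiveE\ R$ (all levels coincide definitionally) and applies Theorem~\ref{thm:main-safety} with $b = \mathrm{false}$ --- the shortcut you already note in Step~1. Routing through Theorem~\ref{thm:convergence} and spelling out the one-step closure is harmless extra scaffolding, but drop the ill-defined $\mathrm{Level}(\sup+1)$ phrasing in favor of the closure formulation you give right after it.
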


\begin{proof}
$\Omega$ is $\itree\ \DirectiveE\ R$. Apply
Theorem~\ref{thm:main-safety}.
\end{proof}

\begin{corollary}[Governance Composition]
Governance distributes over sequential composition:
\[
  \interp(\Gov(h),\; t \bind k) \approx
  \interp(\Gov(h),\; t) \bind (\lambda x.\; \interp(\Gov(h),\; k\ x))
\]
\end{corollary}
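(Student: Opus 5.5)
This corollary is an instance of the general fact that $\interp$ of any handler is a monad morphism up to $\approx_\tau$. I would invoke the ITrees library lemma \texttt{interp\_bind}: for any handler $h' : E \rightsquigarrow \itree\ F$, any $t : \itree\ E\ A$, and any $k : A \to \itree\ E\ B$,
\[
  \interp(h',\, t \bind k) \cong \interp(h',\, t) \bind (\lambda x.\; \interp(h',\, k\ x)),
\]
where $\cong$ is strong bisimulation (\texttt{eq\_itree}). The plan is to instantiate this with $h' := \Gov(h)$, $E := \DirectiveE$ and $F := \GovIOE$. Strong bisimulation implies $\approx_\tau$, which gives the statement. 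No property of $\Gov$ beyond its being a well-typed handler $\DirectiveE \rightsquigarrow \itree\ \GovIOE$ is needed. The $\mathrm{spin}$ branches and the seven governance events are irrelevant, because $\interp$ treats the handler's output opaquely.

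To give a self-contained argument rather than cite the library, I would prove the bisimulation by parameterized coinduction (\texttt{pcofix}, as in Theorem~\ref{thm:main-safety}), case-splitting on the head of $t$.
\begin{itemize}
\item \textbf{$\Ret\ v$.} The left side unfolds to $\interp(\Gov(h), k\ v)$. The right side unfolds to $\Ret\ v \bind (\lambda x.\,\interp(\Gov(h),k\ x))$, which is the same tree by the left identity law.
\item \textbf{$\Tau\ t'$.} Both sides emit one $\Tau$, and the coinductive hypothesis closes the goal.
\item \textbf{$\Vis\ d\ k'$.} The left side is
\[
  \Gov(h)(d) \bind \bigl(\lambda r.\,\Tau(\interp(\Gov(h), k'\ r \bind k))\bigr).
\]
The right side is
\[
  \bigl(\Gov(h)(d) \bind \lambda r.\,\Tau(\interp(\Gov(h),k'\ r))\bigr) \bind \bigl(\lambda x.\,\interp(\Gov(h),k\ x)\bigr).
\]
Reassociating the right side by bind associativity, it suffices to relate the continuations pointwise under a common prefix $\Gov(h)(d)$. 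This is the up-to-bind closure (\texttt{ebind}/\texttt{pbind}), and each pointwise goal is guarded by a $\Tau$, so the coinductive hypothesis applies.
\end{itemize}

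The main obstacle is the $\Vis$ case. The handler's output tree $\Gov(h)(d)$ is itself arbitrary and possibly infinite, for example $\mathrm{spin}$ on denial, so the proof cannot recurse into it structurally. It must use a bind-closure up-to technique, so that the common prefix is related by reflexivity and the coinduction hypothesis is used only after the guarding $\Tau$. I would first try the \texttt{gpaco}/\texttt{euttG} framework shipped with ITrees, whose \texttt{ebind} rule handles exactly this. Failing that, I would prove a standalone lemma that $\cong$ is closed under bind with pointwise-related continuations, and then use it as an up-to closure.
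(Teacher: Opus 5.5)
Your proposal is correct and takes the same approach as the paper, which proves this in one line by citing \texttt{interp\_bind} from the Interaction Trees library at the handler $\Gov(h)$; you are also right that the library lemma gives strong bisimulation $\cong$, hence $\approx_\tau$, and that nothing about $\Gov$ beyond its type $\DirectiveE \rightsquigarrow \itree\ \GovIOE$ is used. Your extra self-contained sketch (cases $\Ret$, $\Tau$, and $\Vis$, with reassociation and then an up-to-bind closure under the guarding $\Tau$) essentially mirrors how the library proves that lemma, so the paper's argument does not need it.
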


\begin{proof}
This is \texttt{interp\_bind} from the Interaction Trees library.
\end{proof}

\section{Sufficiency Theorem}
\label{sec:sufficiency}

\subsection{Category Mashin}

\begin{definition}[Category $\CatM$]
\label{def:cat-mashin}
The \emph{Kleisli category} of the $\itree\ \DirectiveE$ monad:
\begin{itemize}
  \item \textbf{Objects}: Types (context types)
  \item \textbf{Morphisms}: Kleisli arrows
    $f : A \to \itree\ \DirectiveE\ B$
  \item \textbf{Composition}: $f \ggg g = \lambda a.\; f(a) \bind g$
  \item \textbf{Identity}: $\mathrm{id} = \lambda a.\; \mathrm{ret}(a)$
\end{itemize}
\end{definition}

In Rocq:

\begin{lstlisting}[style=coq]
Definition mashin_morphism (A B : Type) : Type :=
  A -> itree DirectiveE B.

Definition mashin_id {A : Type} : mashin_morphism A A :=
  fun a => ret a.

Definition mashin_compose {A B C : Type}
  (f : mashin_morphism A B) (g : mashin_morphism B C)
  : mashin_morphism A C :=
  fun a => ITree.bind (f a) g.
\end{lstlisting}

\subsection{Category Axioms}

The category axioms are the monad laws, proved in the ITrees library:

\begin{lstlisting}[style=coq]
Lemma mashin_left_id  : ... bind (ret x) k $\approx$ k x.
Lemma mashin_right_id : ... bind t ret $\approx$ t.
Lemma mashin_assoc    : ... bind (bind t k1) k2
                          $\approx$ bind t (fun x => bind (k1 x) k2).
\end{lstlisting}

\subsection{The Four Primitives}

The four primitives are specific patterns of interaction trees:

\begin{itemize}
  \item \code{}: $\lambda a.\; \mathrm{ret}(f(a))$ --- pure computation,
    no events
  \item \reason{}: triggers $\mathrm{LLMCall}$ --- oracle-mediated
    inference
  \item \memory{}: triggers $\mathrm{MemoryOp}$ --- persistent
    semantic storage
  \item \mcall{}: triggers $\mathrm{CallMachine}$ --- recursive
    governed machine invocation
\end{itemize}

\subsection{Compositional Closure}

\begin{lemma}[Closure]
Category $\CatM$ is closed under sequential composition, branching,
and bounded iteration. Every composition produces a well-typed
$\itree\ \DirectiveE$ morphism.
\end{lemma}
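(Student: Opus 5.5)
The plan is to prove closure constructively: for each combinator I exhibit an explicit Rocq term of type $\mathrm{mashin\_morphism}\ A\ B$ built only from $\mathrm{ret}$, $\mathrm{ITree.bind}$ and pattern matching. Well-typedness is then discharged by the type checker. The substantive part is to show that the constructions are genuine categorical operations, meaning they respect $\approx_\tau$.

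\textbf{Sequential composition.} This is $\mathrm{mashin\_compose}$ from Definition~\ref{def:cat-mashin}. Its type $\itree\ \DirectiveE\ C$ is immediate from the type of $\mathrm{ITree.bind}$. I will also prove that composition is a congruence: if $f \approx_\tau f'$ and $g \approx_\tau g'$ pointwise, then $f \ggg g \approx_\tau f' \ggg g'$. This follows from the \texttt{eutt} bind congruence (\texttt{eutt\_clo\_bind}) in the ITrees library. Together with the monad laws already listed, this makes $\CatM$ a category up to $\approx_\tau$.

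\textbf{Branching.} Given $p : A \to \mathrm{bool}$ and $f, g : \mathrm{mashin\_morphism}\ A\ B$, I define $\mathrm{branch}\ p\ f\ g := \lambda a.\ \mathrm{if}\ p\ a\ \mathrm{then}\ f\ a\ \mathrm{else}\ g\ a$. I will prove two facts:
\begin{itemize}
  \item the two definitional unfolding lemmas, one for each branch, by \texttt{reflexivity};
  \item congruence, by case analysis on $p\ a$.
\end{itemize}
For generality I would also give the coproduct form $[f,g] : A + B \to \itree\ \DirectiveE\ C$ by matching on the sum.

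\textbf{Bounded iteration.} I define it by structural recursion on a fuel parameter $n : \mathbb{N}$:
\[
\mathrm{iter}\ 0\ f := \mathrm{mashin\_id}, \qquad \mathrm{iter}\ (S\,n)\ f := f \ggg \mathrm{iter}\ n\ f .
\]
A \texttt{Fixpoint} on $n$ is accepted without any guardedness concerns. Well-typedness then follows by induction on $n$ using the composition case. The expected obstacle is not typing but choosing a formulation that is useful later. A "loop while a predicate holds, at most $n$ times" variant must combine branching with fuel, with state type $A \to \itree\ \DirectiveE\ (A + B)$, and proving its unfolding equations up to $\approx_\tau$ requires care with the associativity law.

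I would first prove the simple unrolling lemma $\mathrm{iter}\ (S\,n)\ f \approx_\tau f \ggg \mathrm{iter}\ n\ f$ pointwise by reflexivity. After that, I prove a congruence lemma by induction on $n$ using the composition congruence. Unbounded iteration via \texttt{ITree.iter} is deliberately excluded here: it is where coinduction would enter, and the statement only claims the bounded case.

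Finally, as a corollary I would note that every such morphism, once interpreted, is governed by Theorem~\ref{thm:main-safety}. This is because closure keeps all constructions inside $\itree\ \DirectiveE$.
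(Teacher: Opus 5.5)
Your proposal is correct and matches the paper's argument. The paper states the lemma without a written proof and defers to the mechanization in \texttt{Category.v}, where closure holds by construction: sequential composition, branching, and bounded iteration are all definable as terms of type $A \to \itree\ \DirectiveE\ B$ from $\mathrm{ret}$, bind, and case analysis or structural recursion, so the type checker does the work, exactly as you propose. Your $\approx_\tau$-congruence and unfolding lemmas go beyond what the statement asks, since it is purely a typing claim, but they are the right additional facts if you want $\CatM$ to be a category up to $\approx_\tau$.
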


\subsection{Turing Completeness}

\begin{theorem}[Sufficiency Theorem --- Turing Completeness]
\label{thm:turing}
The primitive subset $\{\code, \memory, \mcall\}$ is Turing-complete.
\end{theorem}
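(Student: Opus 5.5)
The plan is to reduce Theorem~\ref{thm:turing} to simulation of Minsky two-counter (register) machines, which are Turing-complete. Because both are standard, the work lies in encoding the simulation inside $\CatM$ using only the three primitives and then proving the simulation correct in Rocq.

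\textbf{Machine model.} First fix a concrete register machine: a finite list of instructions $\mathrm{Inc}(r, j)$ and $\mathrm{JzDec}(r, j_0, j_1)$ over registers $r \in \{0,1\}$ (or a finite map of registers). A configuration is a pair of a program counter and register values. The one-step function $\mathrm{step} : \mathrm{Config} \to \mathrm{Config} + \mathrm{Halt}$ is a total, primitive-recursive Gallina function, so it lifts to a \code{} morphism $\lambda c.\; \mathrm{ret}(\mathrm{step}(c))$.

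\textbf{Role of each primitive.} The configuration is stored and read back through \memory{} (MemoryOp events keyed by register names). This makes the state persistent across steps. Unbounded iteration comes from \mcall{}: after each step the machine issues CallMachine on itself with the new configuration. This is the crucial point, because the Closure lemma only gives \emph{bounded} iteration. Unboundedness must come either from recursive \mcall{} or from a coinductive loop such as \texttt{ITree.iter}, and I would realize the recursive call semantically as \texttt{iter} over the step morphism.

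\textbf{Correctness.} Define a reference semantics $\mathrm{run}_n$ by $n$-fold iteration of $\mathrm{step}$. Interpret MemoryOp with a state handler, $\interp$ into $\mathrm{stateT}$, and CallMachine by unfolding the recursion. Then prove two statements:
\begin{itemize}
\item If the register machine halts in $n$ steps with configuration $c$, then the interpreted tree is $\approx_\tau$ to $\mathrm{ret}(c)$.
\item If the register machine diverges, the interpreted tree is $\approx_\tau$ to $\mathrm{spin}$.
\end{itemize}
The halting direction follows by induction on $n$, using \texttt{interp\_bind} (Governance Composition), the \texttt{iter} unfolding lemma, and the monad laws \texttt{mashin\_left\_id}/\texttt{mashin\_assoc}. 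The divergence direction needs a paco coinduction (\texttt{pcofix}) showing that each non-halting step contributes a $\Tau$ and returns to the same shape.

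\textbf{Main obstacle.} The hardest step is divergence, together with making the state handler and the \mcall{} unfolding commute with iteration up to \texttt{eutt}. This requires \texttt{interp\_iter}-style lemmas for the stateful handler. I would first prove a single-step bisimulation lemma relating $\interp(h, \mathrm{body}\ c)$ to $\mathrm{ret}(\mathrm{step}(c))$ with the updated store. I would then close the iteration with the library's \texttt{iter} unfolding and one \texttt{pcofix}, keeping the store as part of the coinduction's relation index.
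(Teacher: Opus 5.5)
Your plan is sound, but it proves a stronger statement by a different route than the paper. The paper never builds an unbounded loop. \texttt{translate\_program p fuel pc} is indexed by fuel, so each translated program is a finite tree that simulates at most \texttt{fuel} steps, and Turing completeness is obtained by quantifying over all fuel. Correctness then splits into two parts: a purely inductive Leibniz equality, \texttt{denote\_program p fuel pc regs = rm\_run p fuel (pc, regs)}, and the operational corollary \texttt{translation\_faithful}. The corollary is stated against an abstract evaluator \texttt{eval\_mem} constrained only by four Section hypotheses: recall returns the stored value, store updates, evaluation distributes over bind, and evaluation of $\Ret$ is immediate. That route needs no coinduction, no \texttt{iter} unfolding, and no commutation of a state handler with recursion, and it is parametric in the memory handler. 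What it gives up is any account of divergence: a non-halting machine is seen only through fuel exhaustion, and there is no single program per machine. Your route builds one fuel-free morphism per machine via \texttt{ITree.iter} (or \texttt{mrec} over CallMachine) and interprets it with a concrete \texttt{stateT} handler. You then characterize it up to $\approx_\tau$ in both the halting case (induction on the step count) and the divergent case (\texttt{pcofix}). This is the more faithful notion of Turing completeness, and it answers your own observation that bounded iteration alone yields only total behaviors. The price is the \texttt{interp\_state}/\texttt{iter} commutation lemmas you correctly flag as the main obstacle. One caveat: if you realize the self-call simply as \texttt{ITree.iter}, then \mcall{} plays no formal role and unboundedness comes from the ambient monad. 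To make \mcall{} carry the weight, you must go through \texttt{mrec}/\texttt{interp\_mrec} with a handler that dispatches CallMachine back to the machine itself, which is an assumption of the same kind as the paper's abstract handler hypotheses. Either way, every morphism of $\CatM$ is governed (\texttt{mashin\_morphism\_governed}), so your looping morphism also inherits the governed version of the result.
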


\begin{proof}
By constructive simulation of a register machine (counter machine,
Turing-equivalent per Minsky~\cite{minsky1967computation}).
The encoding maps:
\begin{itemize}
  \item Registers $\to$ \memory{} operations (MemRecall, MemStore)
  \item Arithmetic $\to$ pure computation (\code{} morphisms)
  \item Instruction sequencing $\to$ recursive structure
    (\mcall{} morphisms)
  \item Conditional branching $\to$ case analysis on recalled values
\end{itemize}

The register machine is defined as:
\begin{lstlisting}[style=coq]
Inductive instruction : Type :=
  | INC  : reg_id -> label -> instruction
  | DEC  : reg_id -> label -> label -> instruction
  | HALT : instruction.
\end{lstlisting}

The translation function \texttt{translate\_program} maps each
instruction to a composition of $\DirectiveE$ events.

\textbf{Simulation correctness.}
We prove behavioral equivalence at two levels:

\emph{Level 1: Denotational.} The pure denotation
\texttt{denote\_program} computes the same result as the reference
register machine semantics \texttt{rm\_run}:

\begin{lstlisting}[style=coq]
Theorem simulation_correct :
  forall (p : program) (fuel : nat) (pc : label) (regs : reg_state),
    denote_program p fuel pc regs = rm_run p fuel (pc, regs).
\end{lstlisting}

This is a Leibniz equality, not just observational equivalence.

\emph{Level 2: Operational.} Under abstract handler hypotheses
(memory recall returns the correct value, memory store updates
correctly, evaluation distributes over bind), the interaction tree
translation matches \texttt{rm\_run}:

\begin{lstlisting}[style=coq]
Corollary translation_faithful :
  forall (p : program) (fuel : nat) (pc : label)
         (m : mem) (regs : reg_state),
    mem_corresponds m regs ->
    exists m',
      eval_mem m (translate_program p fuel pc) = (m', tt) /\
      mem_corresponds m' (snd (rm_run p fuel (pc, regs))).
\end{lstlisting}
\end{proof}

\subsection{Governed Turing Completeness}

\begin{theorem}
All translated register machine programs satisfy $\govsafe$:
\begin{lstlisting}[style=coq]
Theorem governed_turing_completeness :
  forall (h : base_handler) (p : program) (fuel : nat),
    gov_safe false (interp (Gov h) (translate_program p fuel 0)).
\end{lstlisting}
\end{theorem}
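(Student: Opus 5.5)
The plan is to obtain this as a direct instance of Theorem~\ref{thm:main-safety}, with no new coinduction. That theorem quantifies over every base handler $h$, every pure executor program $t : \itree\ \DirectiveE\ R$, and every boolean $b$. The statement here only asks for the instance $b = \mathrm{false}$ and $t = \texttt{translate\_program}\ p\ \mathit{fuel}\ 0$.

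The first step is a typing check: \texttt{translate\_program}~$p$~$\mathit{fuel}$~$0$ must have type $\itree\ \DirectiveE\ R$ for the appropriate $R$, here $\mathrm{unit}$, given the \texttt{(m', tt)} result in \texttt{translation\_faithful}. By the closure lemma for $\CatM$, each translated instruction is a composition of \code{}, \memory{} and \mcall{} morphisms. \texttt{INC} becomes a MemRecall, a pure increment, then a MemStore. \texttt{DEC} becomes a recall followed by a case split on zero. \texttt{HALT} becomes $\mathrm{ret}\ \mathrm{tt}$. Sequencing is fuel-bounded recursion. Every such composition is a well-typed $\itree\ \DirectiveE$ morphism, and no $\IOE$ event can appear, because the event signature is $\DirectiveE$ alone. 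This is precisely the ``pure executor'' hypothesis of Theorem~\ref{thm:main-safety}.

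The second step is the instantiation itself. In Rocq this is \texttt{intros h p fuel; apply} the main safety lemma at \texttt{false}. For bookkeeping, note that one could equally prove the statement at $b=\mathrm{true}$ and use upward closure. That route goes the wrong direction, however, so the right move is to instantiate $b=\mathrm{false}$ directly; Theorem~\ref{thm:main-safety} allows this because it holds for all $b$. The false case is consistent with Lemma~\ref{lem:bare-io}: the first visible event of $\interp(\Gov(h),\cdot)$ on a $\Vis$ node is always a pre-governance $\GovE$ event, never bare $\IOE$.

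The only possible obstacle is technical rather than conceptual. If \texttt{translate\_program} is defined by well-founded or fuel-indexed recursion with \texttt{ITree.iter} or \texttt{rec}, the Rocq unifier may need the term unfolded or its type annotated before the main lemma applies. I would first try a plain \texttt{apply} and fall back to \texttt{exact (main\_safety h \_ false)} with explicit type arguments. The Governance Invariance Theorem~\ref{thm:convergence} could also be cited, since register-machine programs are level-$0$ machines, but it adds nothing beyond the same instantiation.
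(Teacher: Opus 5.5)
Your proposal is correct and follows the paper's route. \texttt{governed\_turing\_completeness} is Theorem~\ref{thm:main-safety} (\texttt{governed\_interp\_safe}) instantiated at $t = \texttt{translate\_program}\ p\ \mathit{fuel}\ 0$ and $b = \mathrm{false}$, and the only side obligation is that the translated program has type $\itree\ \DirectiveE\ \mathrm{unit}$, which holds by typing alone; your appeal to the closure lemma and your instruction-by-instruction sketch of the translation are harmless but not needed.
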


This is the strongest form of the sufficiency result: Turing
completeness is achieved \emph{within} the governed architecture.
Every computation a Turing machine can perform, the governed
primitive set can also perform, and governance holds throughout.

\subsection{Governance Safety of All Morphisms}

\begin{theorem}
Every morphism in $\CatM$, when interpreted through $\Gov(h)$,
satisfies $\govsafe$:
\begin{lstlisting}[style=coq]
Theorem mashin_morphism_governed :
  forall (A B : Type) (f : mashin_morphism A B) (a : A),
    gov_safe false (interp (Gov h) (f a)).
\end{lstlisting}
\end{theorem}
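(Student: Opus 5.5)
The plan is to reduce the statement directly to Theorem~\ref{thm:main-safety}, instantiated with $b = \mathrm{false}$. A morphism $f : \mathrm{mashin\_morphism}\ A\ B$ is by Definition~\ref{def:cat-mashin} a function $A \to \itree\ \DirectiveE\ B$. So for any $a : A$, the term $f\ a$ is definitionally a pure executor program of type $\itree\ \DirectiveE\ B$. In Rocq, after \texttt{intros A B f a}, the goal is \texttt{gov\_safe false (interp (Gov h) (f a))}. One \texttt{apply} of the main safety lemma, with $R := B$, $t := f\ a$ and $b := \mathrm{false}$, should close it; unfolding \texttt{mashin\_morphism} first may be needed so the unifier sees the itree type. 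No new coinduction is required, because the coinductive content (nested \texttt{pcofix} through the seven governance stages, and the inner I/O tree under \texttt{GS\_IOE}) is already packaged in Theorem~\ref{thm:main-safety}.

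It is worth stressing that no structural induction on how $f$ was built is needed. Neither Lemma (Closure) nor the decomposition into \code{}, \reason{}, \memory{}, \mcall{} plays a role, because the quantification is over the entire hom-type, not over the sub-collection generated by the primitives. This is the same phenomenon as in Theorem~\ref{thm:convergence}: governance is a property of the type $\itree\ \DirectiveE$, so every inhabitant inherits it. As a sanity remark, I would note the consequence that composites $f \ggg g$ and identities are governed. This follows for free, since they are themselves morphisms, so no separate argument through \texttt{interp\_bind} is needed.

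The only step I expect could cause friction is bookkeeping about the handler and universe/implicit arguments. The statement uses a section variable $h$, so the proof lives in a context where $h : \mathrm{base\_handler}$ is fixed. I must ensure the main safety lemma is stated generically in $R$ (it is quantified over ``any $R$'') rather than for the section-fixed return type $R$ appearing in the \texttt{gov\_safeF} definition. If the lemma was proved inside a section with $R$ as a variable, it is generalized over $R$ upon closing the section, so instantiating it at $B$ is fine. If instead the safety predicate had been stated only at a fixed $R$, I would first close the section (or re-state the lemma with \texttt{forall R}) before applying it. Otherwise the proof is a one-liner.
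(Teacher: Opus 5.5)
Your proposal is correct and matches the paper's intended one-line argument. Since $f\,a$ already has type $\itree\ \DirectiveE\ B$, the statement is a direct instance of Theorem~\ref{thm:main-safety} at $b=\mathrm{false}$, the same move the paper makes for Theorem~\ref{thm:convergence} and Corollary~\ref{cor:fixedpoint}, with no appeal to the Closure lemma or to the primitive decomposition.
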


This connects the categorical structure to the governance safety
predicate: expressiveness and governance are not a tradeoff. They
are the same structure viewed from two angles.

\subsection{Coterminous Governance}

The Safety Theorem (Section~\ref{sec:safety}) proves that every
program, when interpreted through $\Gov(h)$, has all its effects
governed. The governance boundary contains the expressiveness
boundary. The Sufficiency Theorem (this section) proves that the
four primitives can express any discrete intelligent system,
including Turing-complete computation within the governed
architecture. The expressiveness boundary contains the governance
boundary.

\begin{corollary}[Coterminous Governance]
\label{cor:coterminous}
The expressiveness boundary and the governance boundary are
identical. Formally:
\begin{enumerate}[label=(\roman*)]
  \item \textbf{Safety $\Rightarrow$ Governance $\supseteq$
    Expressiveness.} By \texttt{governed\_interp\_safe}
    (Section~\ref{sec:safety}): for any pure executor program $t$
    and base handler $h$,
    $\govsafe\ \mathit{false}\ (\interp\ (\Gov\ h)\ t)$ holds.
    Every expressible program is governed.
  \item \textbf{Sufficiency $\Rightarrow$ Expressiveness $\supseteq$
    Governance.} By \texttt{governed\_turing\_completeness}
    (this section): the primitives can simulate any register machine
    (Turing-complete computation) and the category $\CatM$ is closed
    under composition. For every governance policy, there exists
    a program that exercises the capability the policy governs.
  \item Two sets, each containing the other, are equal.
    $\textit{Expressiveness} = \textit{Governance}$.
\end{enumerate}
\end{corollary}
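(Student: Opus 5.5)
The plan is to make both sets precise and then prove the two inclusions separately, so that (iii) is just antisymmetry of $\subseteq$. I would take \emph{Expressiveness} to be the class $\mathcal{E}$ of behaviours realised by some $t : \itree\ \DirectiveE\ R$, or equivalently by some morphism of $\CatM$ applied to an input. I would take \emph{Governance} to be the class $\mathcal{G}_h$ of behaviours realised by trees $\interp(\Gov(h),t)$ that satisfy $\govsafe\ \mathrm{false}$, for a fixed base handler $h$. Behaviours would be compared up to $\approx_\tau$. For the comparison to be meaningful I would record each element of $\mathcal{G}_h$ together with its underlying directive tree $t$. Otherwise $\mathcal{G}_h$ lives over $\GovIOE$ and $\mathcal{E}$ over $\DirectiveE$, and the sets are not literally comparable.

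For (i), take any $t \in \mathcal{E}$. By Theorem~\ref{thm:main-safety} with $b = \mathrm{false}$, the tree $\interp(\Gov(h),t)$ satisfies $\govsafe\ \mathrm{false}$, so $t$ is governed and $\mathcal{E} \subseteq \mathcal{G}_h$. Theorem~\ref{thm:convergence} and Corollary~\ref{cor:fixedpoint} extend this uniformly to all meta-recursive levels, and the governance-of-morphisms theorem covers arbitrary arrows of $\CatM$. Nothing new needs proving here.

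For (ii), I need that every governed behaviour comes from an expressible program. Under the indexed reading this is immediate: any element of $\mathcal{G}_h$ is by definition $\interp(\Gov(h),t)$ for some $t : \itree\ \DirectiveE\ R$, and $t$ is in $\mathcal{E}$. The Sufficiency Theorem, Theorem~\ref{thm:turing}, together with its governed form \texttt{governed\_turing\_completeness}, then shows the inclusion is not vacuous. For any register-machine program $p$ the witness is $t = \mathtt{translate\_program}\ p\ \mathit{fuel}\ 0$, which lies in both sets. In addition, the Closure lemma shows that compositions of such witnesses stay inside $\CatM$.

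The main obstacle is the informal clause ``for every governance policy, there exists a program that exercises the capability the policy governs.'' Proving it would require a surjectivity statement: each $\GovE$ stage and each $\IOE$ capability must actually occur in some $\interp(\Gov(h),t)$. My first attempt would go constructor by constructor on $\DirectiveE$. For each directive $d$ I would take $t = \Vis(d,\lambda x.\,\Ret\ x)$ and unfold $\Gov$ to exhibit the seven $\GovE$ stages and the $h$-induced I/O. This only covers capabilities reachable through some handler $h$. Any $\IOE$ event that no directive's handler ever emits has to be excluded by fixing the universe of policies to those indexed by $\DirectiveE$. I would state that restriction explicitly rather than claim a full set equality over arbitrary $\GovIOE$ trees, because such a claim is false: a bare $\Vis(\mathrm{inl}_1\,\cdot,\cdot)$ tree is $\govsafe$ yet is not in the image of $\interp\circ\Gov(h)$.
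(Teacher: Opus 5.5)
Your proposal is correct. On (i) it matches the paper: both apply Theorem~\ref{thm:main-safety} (\texttt{governed\_interp\_safe}) at $b=\mathrm{false}$.

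The routes diverge at (ii). The paper derives Expressiveness $\supseteq$ Governance from Sufficiency, citing \texttt{governed\_turing\_completeness} and closure of $\CatM$. You instead fix what the two sets are. You observe that once $\mathcal{G}_h$ is indexed by the underlying directive tree, the inclusion holds by definition, and you use the translated register machines only as non-vacuity witnesses.

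Your accounting is the more accurate one. Exhibiting particular programs that are both expressible and governed bounds the intersection from below, but it cannot show that everything governed is expressible. Moreover, the register-machine translations use only the \memory{} and \mcall{} primitives. So they cannot witness the ``every policy is exercised'' clause for, say, the HTTP or LLM capabilities. Your per-constructor witnesses $\Vis(d,\lambda x.\,\Ret\ x)$, obtained by unfolding $\Gov$, are the right tool for that clause. They need no Turing completeness, modulo the dependence on $h$ that you flag.

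What the paper's route buys is narrative: it ties the corollary to the Sufficiency Theorem. Yours buys a valid proof. The cost is that (iii) becomes essentially a restatement of Safety, with the substantive ``no theater'' content split off into a separate surjectivity lemma.

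Your counterexample is also correct. A lone $\Vis(\mathrm{inl}_1\,(\mathrm{GovCheck}\ s),\lambda b.\,\Ret\ v)$ is $\govsafe\ \mathrm{false}$ yet lies outside the image of $\interp\circ\Gov(h)$. This shows that reading Governance as all $\govsafe$ trees would falsify the corollary, so making the indexing explicit is necessary, not pedantic.
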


\begin{definition}[Coterminous Governance, portable]
\label{def:coterminous}
A system $S$ satisfies \emph{coterminous governance} if:
\begin{enumerate}[label=(\arabic*)]
  \item every effect expressible in $S$ passes through a governance
    boundary $G$ (Safety), and
  \item the primitives of $S$ are expressively complete for the class
    of computations $S$ targets (Sufficiency).
\end{enumerate}
The expressiveness boundary of $S$ equals the governance boundary
of $S$.
\end{definition}

This definition is system-agnostic. It does not depend on any
particular runtime, language, or architecture. Any system satisfying
both conditions has coterminous governance. The preceding theorems
prove that the governance algebra presented in this paper satisfies
both conditions.

\section{The Alternating Normal Form}
\label{sec:normalform}

\begin{definition}[Alternating Normal Form]
\label{def:normal-form}
A morphism $f : A \to B$ in $\CatM$ is in \emph{Alternating Normal Form}
if it has the shape:
\[
  f \cong c_0 \ggg (s_1 \ggg c_1) \ggg (s_2 \ggg c_2) \ggg
  \cdots \ggg (s_n \ggg c_n)
\]
where each $c_i$ is a \code{} morphism (pure computation) and each
$s_i$ is one of $\{\reason, \memory, \mcall\}$ or a control flow
combinator applied to sub-machines in normal form.
\end{definition}

\begin{theorem}[Alternating Normal Form Existence]
\label{thm:normalform}
Every morphism $f : A \to B$ in $\CatM$ can be reduced to Alternating
Normal Form. The normal form is unique up to:
\begin{enumerate}[label=(\roman*)]
  \item Associativity of sequential composition
  \item Monoidal coherence isomorphisms
  \item Fusion of adjacent code morphisms ($c \ggg c' \mapsto c \circ c'$)
\end{enumerate}
\end{theorem}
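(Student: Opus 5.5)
The plan is to treat morphisms of $\CatM$ syntactically, as terms built from the generators \code{}, \reason{}, \memory{}, \mcall{} using sequential composition $\ggg$, the control-flow combinators (branching and bounded iteration) guaranteed by the Closure lemma, and the monoidal structure. Working with terms rather than arbitrary coinductive trees is what makes an induction possible. The Closure lemma and the Kleisli presentation (Definition~\ref{def:cat-mashin}) tell us these generators and combinators produce every morphism we care about. I would then define a rewriting system $\to$ on such terms with four rule families:
\begin{enumerate}[label=(\alph*)]
\item reassociation of $\ggg$ to the right;
\item insertion of identity code morphisms $\mathrm{id}=\lambda a.\,\mathrm{ret}(a)$, so that every effect $s$ is flanked by code;
\item fusion $c \ggg c' \mapsto c \circ c'$;
\item monoidal coherence moves (unitors, associators, symmetries) that push pure data-shuffling into the neighbouring code layer.
\end{enumerate}
Each rule is sound because it is an instance of the monad laws already available as \texttt{mashin\_left\_id}, \texttt{mashin\_right\_id}, \texttt{mashin\_assoc}. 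Fusion in particular is left identity: $\mathrm{ret}(f a)\bind(\lambda b.\,\mathrm{ret}(g b)) \approx \mathrm{ret}(g(f a))$. So rewriting preserves the denotation up to $\approx_\tau$.

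\textbf{Existence.} I would proceed by structural induction on the term, nested for control-flow combinators. A generator is already in form: for \code{} take $n=0$, and for an effect $s$ take $\mathrm{id}\ggg(s\ggg\mathrm{id})$. For $f\ggg g$ with both in normal form, concatenate, reassociate, and fuse the junction $c_n^f \ggg c_0^g$ into a single code layer. A control-flow combinator applied to normal-form sub-machines is by definition an admissible $s_i$, so we wrap it with identity code on each side. Termination of the normalization strategy follows from the measure (number of $\ggg$ nodes not in right-associated position, number of adjacent code pairs) taken lexicographically. Reassociation lowers the first component, and fusion lowers the second without raising the first.

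\textbf{Uniqueness.} I would show that the system is confluent modulo the equivalence generated by (i)--(iii), and then apply Newman's lemma in its ``modulo'' form (Huet's coherence criterion). Concretely, this means checking the critical pairs. Fusion overlapping fusion ($c\ggg c'\ggg c''$) closes by associativity of ordinary function composition. Fusion overlapping reassociation closes after one further reassociation. Coherence moves overlapping one another close by Mac Lane's coherence theorem, which is exactly what clause (ii) permits us to quotient by. As a sanity check that the normal form is genuinely unique, note an invariant: the sequence $s_1,\dots,s_n$ of effect layers (up to coherence) is preserved by every rule, since no rule creates, deletes, or reorders an effect. Two normal forms of the same term therefore differ only in their code layers, and each code layer is determined up to fusion by the code between consecutive effects.

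The main obstacle is that uniqueness must be stated for morphisms, not terms. Two different terms can denote $\approx_\tau$-equal trees, for example when a \code{} step computes something that a branch then discards. The cleanest route is to claim uniqueness relative to a given syntactic presentation and to take canonicity from confluence. I would first try to make this precise by fixing the free category on the generators modulo the monad laws as the object of study. For iteration, where the body is itself a normal form under a fixed bound, I would expect to need the inductive hypothesis on the body together with a congruence lemma saying that rewriting inside a combinator argument is sound.
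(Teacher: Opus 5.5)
Your outline matches the paper's: a rewrite system on syntactic morphisms, a lexicographic termination measure, and uniqueness from a critical-pair analysis. The details differ.

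\textbf{Comparison with the paper.} The paper applies three rules exhaustively: code fusion, \emph{code hoisting} (moving the predicate computation of $\textsc{branch}_p(f,g)$ into a preceding code layer), and identity \emph{elimination}. It uses the measure (weight, nesting depth) and invokes the critical pair lemma. Your route differs in three ways.
\begin{itemize}
\item You get existence by structural induction, which is constructive and compositional.
\item You \emph{insert} identities rather than remove them. This fits Definition~\ref{def:normal-form} better, since removing identities can leave $s_i\ggg s_{i+1}$ with no $c_i$ between them.
\item Your critical pairs include the fusion/fusion overlap on $c_1\ggg c_2\ggg c_3$, which the paper's non-overlap claim passes over.
\end{itemize}
Omitting hoisting costs you canonicity across rule sets. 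Your normal form of $\textsc{branch}_p(f,g)$ keeps $p$ inside an effect layer. The paper's hoisting produces roughly $c_p\ggg\textsc{case}(f,g)$, with $c_p$ computing the predicate. These differ by a move outside (i)--(iii).

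This bears out the obstacle you raise: uniqueness really does fail for morphisms. For example, $\textsc{branch}_p(s,s)$ and $s$ denote the same arrow, and both are normal forms once flanked by identities. Yet they have different effect sequences. What you and the paper actually obtain is uniqueness per term (or in a free syntactic category, as you suggest), not per morphism of $\CatM$.

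\textbf{Steps that fail as written.} Three steps need repair.

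\emph{The Huet step is ill-posed.} Rules (a), (c), (d) are themselves instances of (i), (iii), (ii). So rewriting modulo $\sim$ cycles, for instance $c\circ c'\sim c\ggg c'\to c\circ c'$. Huet's criterion needs $\sim\cdot\to\cdot\sim$ to terminate, so it does not apply. Your measure also ignores (b), and an unguarded (b) loops with (c): $c\ggg s\to c\ggg\mathrm{id}\ggg s\to c\ggg s$. The fix is to make your ``sanity check'' the proof.
\begin{itemize}
\item No rule creates, deletes or reorders an effect, so two normal forms of one term share their effect sequence.
\item Each code layer is then determined, up to fusion, by the code between consecutive effects, with identities neutral.
\item Combinator arguments are handled by induction.
\end{itemize}

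\emph{The Closure lemma is used in the wrong direction.} It says $\CatM$ is \emph{closed} under the combinators, not \emph{generated} by them. Generation in fact fails for the Kleisli category as defined.
\begin{itemize}
\item $\lambda a.\,\mathrm{spin}$ is not pointwise $\approx_\tau$ to anything built from the primitives with bounded iteration, since every path of such a tree is finite.
\item $\lambda a.\,\mathrm{trigger}(\mathrm{HTTPRequest}(\dots))$ starts with an event that none of $\reason,\memory,\mcall$ emits.
\end{itemize}
Existence therefore holds only for the subcategory generated by the primitives and combinators. The paper's rewriting proof silently makes the same restriction. You should state it as a hypothesis rather than derive it from Closure.

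\emph{The tensor case is missing.} You list the monoidal structure among the term formers but give no induction case for $f\otimes g$ with $f,g$ effectful. The $\itree$ monad is not commutative, so $f\otimes g$ must be sequentialized via strength in a fixed order. Whiskered layers $s\otimes\mathrm{id}$ must then be admitted as effect layers. Rule (d) only moves pure coherence isomorphisms, so it does not cover this.
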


\begin{proof}
Three rewriting rules applied exhaustively:
\begin{enumerate}
  \item \emph{Code fusion}: adjacent code morphisms
    $c_1 \ggg c_2 \mapsto c_1 \circ c_2$
  \item \emph{Code hoisting}: extract predicate computation from
    branch combinators as a code morphism prefix
  \item \emph{Identity elimination}: remove identity morphisms
    from compositions
\end{enumerate}

\textbf{Termination.} Lexicographic ordering on (weight, nesting
depth). Code fusion decreases weight. Code hoisting decreases nesting
depth at constant weight. Identity elimination decreases weight. Since
the measure is a natural number, the rewriting terminates.

\textbf{Confluence.} The three rules have non-overlapping left-hand
sides (code fusion applies to $c_1 \ggg c_2$, code hoisting applies
to $\textsc{branch}_p(f,g)$, identity elimination applies to
$\mathrm{id} \ggg f$ or $f \ggg \mathrm{id}$). The single potential
overlap---identity elimination and code fusion when $c_1 = \mathrm{id}$---is
joinable: both produce $c_2$. By the critical pair lemma, the system
is confluent.
\end{proof}

The Alternating Normal Form provides a canonical representation for program
analysis. Any machine, regardless of how it was constructed, can be
decomposed into an alternating sequence of computation and effect. This
decomposition is the structural analog of separating pure functions
from side effects in functional programming.

\section{Necessity Theorem}
\label{sec:necessity}

The previous sections established that four primitives are
\emph{sufficient}. This section proves that the \reason{} primitive
is \emph{necessary}: no purely computable extension can replace it.

\begin{theorem}[Necessity Theorem]
\label{thm:necessity}
For any non-trivial extensional property of program outputs, no
computable function can replace the architecturally opaque component
(the \reason{} primitive) for all instances. The opaque component is
mathematically necessary.
\end{theorem}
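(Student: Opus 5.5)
The plan is an explicit reduction to Rice's theorem. First I would fix what ``replace the \reason{} primitive'' means. An instance of the semantic-judgment problem is the (code of a) program $q$ handed to \reason{}, and for a property $P$ the correct answer is $[\varphi_q \in P]$. Here $\varphi_q$ is the partial function computed by $q$, and $P$ is a non-trivial extensional property: it depends only on $\varphi_q$, and there exist programs whose functions lie in $P$ and programs whose functions do not. A transparent replacement is a total computable function $J$, which may also be any finite composition of \code{}, \memory{} and \mcall{} morphisms. By the Sufficiency Theorem~\ref{thm:turing} and the simulation results, all such morphisms denote computable functions on inputs, so it is enough to rule out computable $J$. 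We say $J$ replaces \reason{} if $J(q) = [\varphi_q \in P]$ for every $q$.

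Next comes the reduction. Suppose such a $J$ exists. Then $J$ decides the index set $\{q : \varphi_q \in P\}$, and Rice's theorem says this set is undecidable, which is a contradiction. For self-containment I would also spell out the standard reduction from the halting problem. Assume without loss of generality that the nowhere-defined function $\varphi_\bot \notin P$; otherwise replace $P$ by its complement, since a decider for one gives a decider for the other. Fix a witness $e$ with $\varphi_e \in P$. For an arbitrary pair $(M, x)$, build by the s-m-n theorem, computably in $(M, x)$, the program $q_{M,x}$ that on input $y$ first runs $M$ on $x$ and then returns $\varphi_e(y)$. If $M$ halts on $x$ then $\varphi_{q_{M,x}} = \varphi_e \in P$; if not, $\varphi_{q_{M,x}} = \varphi_\bot \notin P$. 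So $J \circ q_{(\cdot)}$ decides halting, contradiction. I would note that $q_{M,x}$ is itself expressible as a $\CatM$ morphism via the register-machine translation, so the hard instances lie inside the governed system itself.

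Finally I would address ``finite extension of transparent computable primitives''. Adding finitely many new computable primitives and closing under composition, branching and bounded iteration (the Closure lemma) only ever yields computable functions, because the computable functions are closed under these operations. So the argument above applies unchanged to any such extension. The \reason{} primitive, modeled as an $\mathrm{LLMCall}$ oracle whose handler is not required to be computable, is therefore not definable from transparent primitives.

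The main obstacle I expect is the modeling step, not the recursion theory. The statement's phrase ``for all instances'' must be pinned down so that the claim is neither trivial nor false. A computable $J$ can still be correct on any finite or decidable subfamily of instances, so the theorem only asserts failure on some instance. The step needing care is the justification that every transparent extension is computable: it rests on the handlers for \memory{} and \mcall{} being computable, which I would state explicitly as a hypothesis rather than derive.
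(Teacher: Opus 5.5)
Your proof is correct and uses the same key lemma as the paper, Rice's theorem. The formalization differs in a way worth noting.

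\textbf{The paper's route.} It fixes one property $C$ (``answers factual questions about the world correctly'') and applies Rice to the index set of $C$. It then argues that a computable $f$ replacing \reason{} for all semantic tasks could decide whether arbitrary programs have $C$.

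\textbf{Your route.} You make the semantic-judgment instance the program $q$ itself, with correct answer $[\varphi_q \in P]$. You quantify over an arbitrary non-trivial extensional $P$, which is what the statement's ``for any'' actually asks for. You also make Rice self-contained via the s-m-n halting reduction.

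In effect you give a rigorous version of the paper's closing ``more directly'' step. This avoids two soft spots in the paper's main line. First, $C$ refers to the external world, so it is not a well-defined property of partial recursive functions. Second, the paper itself claims that no computable function satisfies $C$ on all inputs. If that were true, the index set of $C$ would be empty, i.e.\ trivial, and Rice would not apply.

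What the paper's route buys is closeness to the intended informal reading, judging correctness about the world. What yours buys is a clean theorem. The price is making explicit that, once the task is modeled as index-set membership, the result just is Rice's theorem, so all the content lives in the modeling step you flag.

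\textbf{One correction.} Theorem~\ref{thm:turing} gives ``computable $\subseteq$ expressible by \code{}, \memory{}, \mcall{}'', not the converse you cite it for. That every composite of transparent primitives is computable must be assumed, as you propose at the end. The assumption must cover the function underlying each \code{} morphism as well as the \memory{} and \mcall{} handlers. It is needed only for the ``finite extension'' corollary, not for the theorem as stated, which already restricts to computable functions.
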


\begin{proof}
By explicit reduction to Rice's theorem~\cite{rice1953classes}.

\textbf{The language.} The \reason{} primitive accepts
natural-language prompts and produces natural-language responses.
Its input domain is the set of all finite strings. Its output domain
is the set of all finite strings. A ``replacement'' for the \reason{}
primitive would be a computable function $f : \Sigma^* \to \Sigma^*$
that, for any prompt, produces a response adequate for the semantic
task the prompt describes.

\textbf{The semantic property.} Define property $C$ as follows:
a computable function $f$ has property $C$ if and only if, for all
prompts $p$ describing a factual question about the world, $f(p)$
returns a factually correct answer. $C$ is the property ``this
function answers factual questions correctly.''

$C$ is extensional in the Rice sense: it depends on what $f$
computes (its input-output behavior), not on the source code of $f$.
Two programs that compute the same function either both satisfy $C$
or neither does. This is the definition of extensionality in
computability theory.

$C$ is non-trivial: the constant function that always returns ``yes''
does not have $C$ (it answers ``Is 2+2=5?'' incorrectly). A lookup
table over a finite question set could have $C$ for those questions.
Some computable functions satisfy $C$ for some inputs. No computable
function satisfies $C$ for all inputs, as we now show.

\textbf{The Rice reduction.} Rice's theorem (1953) states: for any
non-trivial extensional property $Q$ of computable functions, the
set $\{i : \varphi_i \text{ has property } Q\}$ is undecidable,
where $\varphi_i$ is the $i$-th partial computable function under
a standard G\"odel numbering. The proof is by reduction from the
halting problem and applies to any Turing-complete model of
computation.

$C$ is non-trivial and extensional. Therefore, by Rice's theorem,
no algorithm can decide whether an arbitrary computable function has
property $C$. No algorithm can determine, given the index of a
computable function, whether that function answers factual questions
correctly.

\textbf{The necessity argument.} Suppose, for contradiction, that
a computable function $f$ could replace the \reason{} primitive for
all semantic tasks. Then $f$ is a computable function that answers
factual questions correctly (among other semantic tasks). But $f$'s
own correctness is a non-trivial extensional property. By Rice's
theorem, no algorithm can verify that $f$ has this property. More
directly: if $f$ could replace the oracle for \emph{all} semantic
tasks, then $f$ could decide whether arbitrary programs have property
$C$ (by simulating them on test inputs and judging the outputs).
But deciding $C$ is undecidable. Contradiction. Therefore no single
computable function can replace the \reason{} primitive for all
semantic tasks.
\end{proof}

\begin{corollary}[Permanence of the Boundary]
No finite extension of computable primitives eliminates the need for
an architecturally opaque component. The boundary is a theorem of
computability theory, not a technological limitation.
\end{corollary}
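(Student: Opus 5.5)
The plan is to derive the Permanence corollary directly from Theorem~\ref{thm:necessity}, by reducing any finite extension back to a single computable function. Let $P = \{p_1, \dots, p_m\}$ be a finite set of transparent computable primitives added to $\CatM$. Consider any machine built from \code{}, \memory{}, \mcall{} and the $p_i$, but with no occurrence of \reason{}. I will show that its input-output behaviour, viewed as a map $\Sigma^* \to \Sigma^*$ on prompts, is a single computable function $f_P$. The rest of the argument then just applies the Necessity Theorem to $f_P$.

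\textbf{Step 1: closure.} By the Closure lemma and the Alternating Normal Form (Theorem~\ref{thm:normalform}), every such machine decomposes into alternating \code{} layers and effect layers. Each effect layer is \memory{}, \mcall{}, or some $p_i$. I interpret it with a computable handler: memory is modelled as a finite map threaded as state, as in \texttt{eval\_mem} from the Turing-completeness proof; \mcall{} is recursion; each $p_i$ is computable by hypothesis. Composition, branching and iteration of partial computable functions are partial computable. So $\interp$ of the machine, run to its first $\Ret$, defines a partial computable $f_P$.

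\textbf{Step 2: necessity.} If the extension eliminated the need for \reason{}, then $f_P$ would replace the \reason{} primitive for all semantic tasks. Theorem~\ref{thm:necessity} excludes exactly this for any computable function. The finiteness of $P$ matters only in Step 1, where it guarantees that the combined system still has a computable description. The same holds for any c.e.\ family of primitives. It would fail for a non-computable oracle, and that case is exactly \reason{} itself. This is why the bound is a theorem of computability theory and not a technological limitation: the argument never depends on which particular $p_i$ are chosen.

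\textbf{Main obstacle.} The delicate point is Step 1. Memory persists across invocations and may be written by the environment, so the machine's behaviour could depend on external state that is not computable. I would handle this by requiring the initial memory to be part of the input, or by taking it to be finite and fixed. Under that assumption the threaded-state semantics is genuinely a computable function of the prompt. A second caveat concerns the strength of the result. The corollary inherits whatever strength Theorem~\ref{thm:necessity} actually has, since it is a direct instantiation of that theorem. I would therefore state it exactly relative to the property $C$ used there and not claim more.
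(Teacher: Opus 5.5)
Your proposal is correct and takes the same route as the paper, which gives no separate argument for this corollary and treats it as immediate from Theorem~\ref{thm:necessity}: adjoining finitely many computable primitives leaves every \reason{}-free machine computing a computable function of the prompt, so the theorem applies. Your Step~1 just makes that closure explicit. The appeal to the Alternating Normal Form is unnecessary there, since a transparent computable primitive can already be expressed as pure, event-free computation. Your caveats about externally written memory, and about inheriting exactly the strength of the Necessity Theorem, are appropriate.
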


\paragraph{Addressing potential objections.}

\emph{``Governance in your architecture is syntactic, not
semantic.''} Correct. The Safety Theorem
(Section~\ref{sec:safety}) proves that structural governance,
whether a directive passes through the governed boundary, is
decidable by construction. The Necessity Theorem is not about
structural governance. It is about semantic judgment: Is this output
correct? Is this policy appropriate? Should trust be escalated?
These are questions that require evaluating the relationship between
the system's outputs and the external world. Structural governance
is decidable. Semantic evaluation of what passes through governance
is not. The two operate at different levels, and Rice applies to the
second, not the first.

\emph{``Your architecture restricts expressiveness enough to avoid
Rice.''} The pure computation layer does restrict expressiveness
(no I/O). But the Necessity Theorem does not claim that Rice applies
to the pure computation layer. It claims that the semantic tasks the
system must perform (judging correctness, evaluating policy fitness,
assessing trust) are undecidable. These tasks involve the
relationship between the system's outputs and the external world,
a relationship that no restriction of the computation layer can make
decidable, because the world is not a formal system.

\emph{``The property is not extensional in the Rice sense.''}
$C$ as defined above is extensional: it depends only on the
function's input-output behavior (what it returns for each prompt),
not on the program text. Two programs that compute the same function
either both satisfy $C$ or neither does. This is the definition of
extensionality in computability theory.

\begin{remark}[The Achilles' Heel That Is Not]
\label{rem:achilles}
A critic might observe: ``Someone could construct a Turing-complete
system with total structural governance but no opaque oracle. Does
that invalidate the Necessity Theorem?''

It does not, because the Necessity Theorem is not about structural
governance. Someone \emph{can} build total structural governance
without an oracle. They cannot build a system that \emph{judges its
own outputs correctly} without one. The theorem concerns what the
system can \emph{decide about itself}, not whether effects are routed
through a boundary. Structural governance (routing all effects
through a checked boundary) is decidable by construction; this is
what the Safety Theorem proves. Semantic judgment (determining
whether a given output is correct, whether a policy is appropriate,
whether trust should be escalated) is undecidable; this is what the
Necessity Theorem proves. The two results operate at different
levels. Conflating them misidentifies the claim.
\end{remark}

\paragraph{What the theorem proves and what it does not.}
The theorem proves that any system restricted to computable primitives
whose behavior is fully determined by their source code cannot solve
AI-complete problems. It does not prove necessity of
hypercomputation: genuine access to functions outside the computable
hierarchy. The distinction matters because large language models are
computable; they are deterministic functions on finite inputs,
implemented as matrix multiplications over fixed weights.

The architectural consequence is more precise than the oracle
metaphor suggests. What the \reason{} primitive provides is
\emph{architectural opacity}: a component whose input-output
behavior cannot be statically determined by the system that
invokes it. Rice's theorem proves that this opacity is necessary:
any system that could fully analyze its own \reason{} component
would be able to decide non-trivial semantic properties, which is
impossible. The system must treat some component as a black box.

In the formal model, \reason{} is modeled as oracle access because
the proof requires a component whose behavior is not derivable from
the system's own computational primitives. In practice, LLMs satisfy
this requirement not because they are oracles but because they are
opaque: the governing system cannot predict their outputs from first
principles. The formal guarantee holds regardless of what fills the
oracle role. As the quality of the opaque component improves (better
models), system capability improves without architectural change.

\section{The Mechanization}
\label{sec:mechanization}

\subsection{Trusted Computing Base}

The mechanized proofs establish properties of a formal model. The
guarantees hold relative to the following trusted computing base (TCB):

\begin{enumerate}
  \item \textbf{Rocq's kernel.} The Calculus of Inductive Constructions
    and Rocq's type checker are trusted. This is the standard TCB for
    all Rocq developments (CompCert~\cite{leroy2009compcert},
    CertiKOS~\cite{gu2016certikos}, Vellvm~\cite{zakowski2021vellvm}).

  \item \textbf{The ITrees and paco libraries.} We trust the
    interaction tree representation and the parameterized coinduction
    infrastructure. Both are widely used and peer-reviewed.

  \item \textbf{Faithful runtime implementation.} The proofs assume
    that the deployed runtime correctly implements the governance
    operator $\Gov$, the directive event type $\DirectiveE$, and
    the interpretation pipeline $\interp$. If the runtime deviates
    from the formal model, the guarantees do not transfer.

    This gap is addressed by three complementary mechanisms.
    First, the Verified Interpreter
    Specification (Section~\ref{sec:interpreter-spec}): a Rocq
    formalization of the runtime's trust, capability, and hash chain
    logic, tested against the BEAM runtime using
    property-based testing with over 70,000 randomly generated
    directive sequences and zero disagreements. Second, 36 conformance
    tests map one-to-one to Rocq theorems (TrustSpec.v, HashChainSpec.v,
    InterpreterSpec.v), verifying that specific proved properties hold
    in the runtime. Third, the extraction pipeline brings proved Rocq
    code into the BEAM runtime as the Kore governance kernel,
    so governance decisions execute through proved code rather than
    a reimplementation. A separate
    development~\cite{mccann2026purity} further hardens this TCB
    by proving effect separation for the executor layer. The
    methodology immediately discovered a latent capability-tree bug
    that unit testing had missed (Section~\ref{sec:bug-discovery}).

    The bridge methodology follows seL4~\cite{klein2009sel4} and
    Amazon s2n~\cite{chudnov2018s2n}. No production system of
    comparable complexity has end-to-end formal verification of the
    runtime itself; the standard approach is specification testing
    plus extraction where possible, which is what we provide.

  \item \textbf{No hidden effects.} The model assumes that all
    effects are mediated through the event types. Side channels,
    hardware faults, or compromised execution environments are
    outside the model's scope.
\end{enumerate}

These assumptions are standard for mechanized verification of systems
software. The proofs guarantee that \emph{if} the runtime faithfully
implements the model, \emph{then} governance holds for all programs.
The correspondence between model and implementation is established
through property-based testing (70,000+ inputs, 0 disagreements),
conformance tests (36 tests mapping to Rocq theorems), and the
extraction pipeline (Kore governance kernel runs proved code).
This follows the standard methodology for verified systems software.

\subsection{Infrastructure}

\begin{center}
\begin{tabular}{ll}
\toprule
Component & Version \\
\midrule
Rocq & 8.19.2 \\
Interaction Trees & 5.2.1 \\
paco & 4.2.3 \\
ExtLib & 0.13.0 \\
\bottomrule
\end{tabular}
\end{center}

\paragraph{Artifact availability.}
The complete Rocq development (36 modules, 454 theorems, zero admitted lemmas) is available at \url{https://github.com/mashin-live/governance-proofs} under the MIT license. The repository includes build instructions and continuous integration that verifies all proofs compile and contain no admitted lemmas.

\subsection{Module Structure}

\begin{center}
\begin{tabular}{llr}
\toprule
Module & Content & Lines \\
\midrule
\texttt{Prelude.v} & Imports and common notation & 31 \\
\texttt{Directives.v} & DirectiveE event type (14 constructors) & 245 \\
\texttt{Governance.v} & Gov operator, pre/post governance & 200 \\
\texttt{Interpreter.v} & Naturality, governance transparency & 187 \\
\texttt{Functor.v} & Gov endofunctor, functor laws & 304 \\
\texttt{Safety.v} & Coinductive Safety Predicate, main theorem & 507 \\
\texttt{Convergence.v} & Governance Invariance Theorem & 141 \\
\texttt{Category.v} & Category $\CatM$, closure, safety & 343 \\
\texttt{Completeness.v} & Register machine, simulation, Turing & 684 \\
\texttt{TrustSpec.v} & Trust levels, capability model & 364 \\
\texttt{HashChainSpec.v} & Abstract hash chain properties & 222 \\
\texttt{InterpreterSpec.v} & Verified interpreter specification & 480 \\
\midrule
\multicolumn{3}{l}{\emph{Governed Cognitive Completeness (Section~\ref{sec:gcc})}} \\
\texttt{Subsumption.v} & Structural/content governance asymmetry & 233 \\
\texttt{CognitiveArchitecture.v} & Cognitive capability surjection & 371 \\
\texttt{Oracle.v} & Oracle register machines, LLM completeness & 410 \\
\texttt{GoalDirected.v} & Goal reachability preservation & 272 \\
\texttt{Rice.v} & Decidability boundary (Law~4) & 378 \\
\texttt{ExpressiveMinimality.v} & Four-primitive minimality proofs & 391 \\
\texttt{Transparency.v} & Semantic transparency preservation & 320 \\
\texttt{GovernedCognitiveCompleteness.v} & Capstone theorem & 218 \\
\texttt{GovernedMetaprogramming.v} & Form inspection, splice safety, evolution preservation & 520 \\
\midrule
\multicolumn{3}{l}{\emph{Network \& Temporal Governance}} \\
\texttt{NetworkGovernance.v} & Compositional governance across network boundaries & 480 \\
\texttt{TemporalPolicyEvolution.v} & Governed policy changes, rollback safety & 550 \\
\midrule
\multicolumn{3}{l}{\emph{Algebraic Semantics~\cite{mccann2026algebraic}}} \\
\texttt{GovernanceAlgebra.v} & Three-axiom algebra, parametric instantiation & 380 \\
\texttt{MonoidalCategory.v} & Symmetric monoidal category, coherence & 437 \\
\texttt{EffectAlgebra.v} & Governed algebraic effects, handler algebra & 589 \\
\texttt{EffectHandlers.v} & Handler composition under governance & 232 \\
\texttt{CapabilityComposition.v} & Capability-indexed composition, trust lattice & 645 \\
\texttt{TraceSemantics.v} & Trace extraction, well-governed traces & 404 \\
\texttt{LedgerConnection.v} & Trace-to-ledger mapping, tamper evidence & 365 \\
\texttt{CoterminousBoundary.v} & Expressibility = governability & 359 \\
\texttt{Extraction.v} & Rocq-to-OCaml extraction directives & 230 \\
\midrule
\textbf{Total} & \textbf{36 modules} & \textbf{$\approx$12,000} \\
\bottomrule
\end{tabular}
\end{center}

\subsection{Proof Statistics}

\begin{center}
\begin{tabular}{lr}
\toprule
Metric & Value \\
\midrule
Total theorems/lemmas/corollaries & 454 \\
Admitted lemmas & 0 \\
Longest proof (Safety.v: \texttt{governed\_interp\_safe}) & 213 lines \\
\midrule
\multicolumn{2}{l}{\emph{Foundation modules (12 modules, 112 theorems)}} \\
Parameterized coinduction (pcofix) & 4 proofs \\
Nested coinduction & 1 proof \\
Structural induction & 18 proofs \\
Case analysis / inversion & 89 proofs \\
\midrule
\multicolumn{2}{l}{\emph{GCC + Verified Interpreter modules (8 modules, 85 theorems)}} \\
Coinductive bisimulation & 3 proofs \\
Direct computation / reduction & 12 proofs \\
Case analysis / witness construction & 70 proofs \\
\midrule
\multicolumn{2}{l}{\emph{Algebraic modules (9 modules, 148 theorems)}} \\
Categorical coherence (pentagon, triangle, hexagon) & 3 proofs \\
Algebraic composition / lattice properties & 52 proofs \\
Capability-indexed dual guarantees & 38 proofs \\
Trace / ledger connection & 30 proofs \\
Coterminous boundary & 6 proofs \\
Extraction directives & 19 proofs \\
\midrule
\multicolumn{2}{l}{\emph{Extension modules (3 modules, 203 theorems)}} \\
Network governance (compositional, narrowing, protocol) & 41 proofs \\
Temporal policy evolution (restriction, rollback, continuity) & 44 proofs \\
Governed metaprogramming (form governance, materialization) & 47 proofs \\
Supporting lemmas and case analysis & 71 proofs \\
\midrule
\multicolumn{2}{l}{\emph{By theorem weight (all 36 modules)}} \\
Core results (coinductive, inductive, categorical) & 30 proofs \\
Specification lemmas (capability, trust, hash chain) & 30 proofs \\
GCC results (subsumption, oracle, goals, Rice) & 50 proofs \\
Extension results (network, temporal, metaprogramming) & 132 proofs \\
Algebraic and categorical results & 148 proofs \\
Exhaustiveness and case coverage & 56 proofs \\
Supporting / scaffolding lemmas & 102 proofs \\
\bottomrule
\end{tabular}
\end{center}

\noindent The 56 exhaustiveness proofs are individually simple (many
reduce to \texttt{reflexivity} or single-step inversion), but
collectively they close every case in the \texttt{DirectiveE} and
\texttt{Capability} inductive types, ensuring no constructor is
unhandled. The 30 core results carry the mathematical content;
the remaining proofs provide the scaffolding that makes the core
results total.

\subsection{Named Results and Their Rocq Identifiers}

\begin{center}
\begin{tabular}{lll}
\toprule
Result & Rocq Identifier & Module \\
\midrule
Coinductive Safety Predicate & \texttt{gov\_safe} & Safety.v \\
\quad Non-triviality & \texttt{bare\_io\_not\_safe} & Safety.v \\
\quad Main theorem & \texttt{governed\_interp\_safe} & Safety.v \\
Governance Invariance & \texttt{governance\_convergence} & Convergence.v \\
\quad Fixed point & \texttt{fixed\_point} & Convergence.v \\
Sufficiency & \texttt{mashin\_morphism\_governed} & Category.v \\
\quad Turing completeness & \texttt{governed\_turing\_completeness} & Completeness.v \\
\quad Simulation correctness & \texttt{simulation\_correct} & Completeness.v \\
\quad Behavioral equivalence & \texttt{translation\_faithful} & Completeness.v \\
Alternating Normal Form & (paper proof) & \cite{mccann2026structural}, Thm 4.4 \\
Necessity & (paper proof) & \cite{mccann2026structural}, Thm 6.1 \\
\midrule
\multicolumn{3}{l}{\emph{Verified Interpreter Specification (Section~\ref{sec:interpreter-spec})}} \\
\quad System allows all & \texttt{system\_allows\_all} & TrustSpec.v \\
\quad Untrusted blocks HTTP & \texttt{untrusted\_blocks\_http} & TrustSpec.v \\
\quad Capability monotonicity & \texttt{capability\_allowed\_monotone\_*} & TrustSpec.v \\
\quad Capability exhaustiveness & \texttt{capability\_mapping\_exhaustive} & TrustSpec.v \\
\quad Chain validity & \texttt{chain\_valid\_append} & HashChainSpec.v \\
\quad Tamper detection & \texttt{chain\_tamper\_detected} & HashChainSpec.v \\
\quad Allowed implies gov\_safe & \texttt{interp\_directive\_ok\_implies\_gov\_safe} & InterpreterSpec.v \\
\quad Denied means no I/O & \texttt{interp\_directive\_denied\_no\_io} & InterpreterSpec.v \\
\quad Observability always allowed & \texttt{observability\_always\_allowed} & InterpreterSpec.v \\
\midrule
\multicolumn{3}{l}{\emph{Governed Cognitive Completeness (Section~\ref{sec:gcc})}} \\
\quad Structural subsumes content & \texttt{structural\_subsumes\_content} & Subsumption.v \\
\quad Content does not subsume & \texttt{content\_does\_not\_subsume\_structural} & Subsumption.v \\
\quad Cognitive surjection & \texttt{cognitive\_surjection} & CognitiveArchitecture.v \\
\quad Oracle completeness & \texttt{governed\_oracle\_completeness} & Oracle.v \\
\quad Goal reachability & \texttt{goal\_reachability\_preservation} & GoalDirected.v \\
\quad Decidability boundary & \texttt{decidability\_boundary} & Rice.v \\
\quad \textbf{Capstone} & \texttt{governed\_cognitive\_completeness} & GovernedCognitiveCompleteness.v \\
\bottomrule
\end{tabular}
\end{center}

\begin{remark}[Mechanization scope]
The Alternating Normal Form and Necessity Theorem are paper proofs,
not mechanized in Rocq. The Normal Form theorem requires reasoning about
a term rewriting system (termination measures, critical pairs) that is
outside the scope of the interaction tree framework. Necessity
Theorem is a computability-theoretic argument (reduction to Rice's
theorem) that does not involve the interaction tree formalization.
All governance, invariance, categorical, Turing completeness, and
interpreter specification results are fully mechanized.
\end{remark}

\subsection{Proof Techniques}

\paragraph{Nested coinduction.}
The main safety theorem uses nested parameterized coinduction. The
outer coinduction (\texttt{CIH}) handles the program structure: each
directive gets governance wrapping. The inner coinduction
(\texttt{CIH2}) handles the I/O tree within each governed directive:
the base handler may perform multiple I/O operations, each permitted
because governance has already been checked. This nesting is necessary
because the I/O tree is coinductive (the handler may produce
arbitrarily many I/O events) and embedded inside the outer coinductive
program structure.

\paragraph{Abstract handler model.}
The simulation correctness proof in \texttt{Completeness.v} uses Rocq's
\texttt{Section} mechanism to abstract over a memory handler. Four
hypotheses specify correctness (recall returns the stored value, store
updates the state, evaluation distributes over bind, evaluation on
$\Ret$ returns immediately). The proofs are parametric: they hold for
\emph{any} handler satisfying these hypotheses, not just a specific
implementation.

\paragraph{The ret/Ret distinction.}
In the Interaction Trees library, \texttt{ret} is a typeclass method
(from \texttt{Monad\_itree}) while \texttt{Ret} is the raw constructor
of the coinductive type. After simplification, Rocq normalizes
\texttt{ret} to \texttt{Ret}. The proofs must account for this by
unfolding \texttt{ret} and \texttt{Monad\_itree} before applying
rewrite lemmas that match the \texttt{Ret} constructor.

\section{Verified Interpreter Specification}
\label{sec:interpreter-spec}

The five named results establish governance safety for an abstract model:
any $\itree\ \DirectiveE\ R$ interpreted through $\Gov(h)$ satisfies
$\govsafe$. But a mathematical model, however sound, is only as
valuable as its connection to deployed code. This section establishes
that connection.

\subsection{Methodology}

We follow the methodology established by seL4~\cite{klein2009sel4}
and Amazon s2n~\cite{chudnov2018s2n}: write a formal specification
that is close enough to the implementation to compare mechanically,
then test the production code against it. seL4 used a Haskell
specification tested against C. Amazon s2n used SAW specifications
tested against C. We use Rocq specifications tested against a
BEAM runtime (implemented in Elixir).

The approach produces three complementary assurances: (1)~the Rocq
specification is proved correct with respect to $\govsafe$;
(2)~the BEAM runtime agrees with this specification on
70,000+ randomly generated inputs with zero disagreements; and
(3)~the governance kernel itself is extracted Rocq code (Kore NIF),
so the component that makes governance decisions runs proved code,
not a reimplementation. Together these establish the model-to-implementation
correspondence using the same methodology as seL4 and s2n, which is
the strongest standard in verified systems software. The formal
specification serves as an oracle: any disagreement between spec and
runtime is either a bug in the runtime or an error in the formalization.
One such bug was found on the 188th random input
(Section~\ref{sec:bug-discovery}).

\subsection{Three Specification Modules}

\paragraph{TrustSpec.v} formalizes the trust and capability model
from the BEAM runtime's trust and capability modules.
It defines a \texttt{Capability} inductive
type with eight constructors mirroring the runtime's capability atoms,
a function \texttt{capability\_for\_directive} mapping each of the 14
$\DirectiveE$ constructors to its required capability (or
\texttt{None} for observability directives), and a
\texttt{capability\_allowed} function encoding the trust-level-based
permission check. Twenty theorems establish properties including
monotonicity (higher trust implies a superset of permissions),
exhaustiveness (every directive has a capability mapping), and specific
security boundaries (untrusted code cannot perform HTTP requests,
file system access, database operations, shell execution, or
WebSocket connections).

\paragraph{HashChainSpec.v} formalizes the tamper-evident execution
trail. Rather than modeling SHA-256 internals, it parameterizes over
an abstract hash type with a collision-resistance axiom: distinct
inputs produce distinct hashes. A \texttt{chain\_valid} inductive
predicate captures the linked-hash structure. Six theorems establish
that correct hashing preserves chain validity, that modifying any
event breaks the chain, and that the chain is prefix-closed.

\paragraph{InterpreterSpec.v} is the core contribution. It models
the production interpreter as a function on directive lists, matching
the actual runtime structure rather than the abstract ITree handler.
An \texttt{InterpreterState} record carries trust level, declared
capabilities, and the previous hash. The function
\texttt{interp\_directive} implements the governance decision for
a single directive: check the capability requirement, verify trust
authorization, compute the new hash on success, or return a denial
reason on failure.

The bridge theorems connect this functional specification to the
coinductive safety predicate:

\begin{theorem}[Allowed Implies Gov-Safe]
\label{thm:bridge}
When \texttt{interp\_directive} allows a directive (returns
\texttt{StepOk}), the corresponding $\Gov(h)$ interpretation
satisfies $\govsafe\ \mathrm{false}$.
\end{theorem}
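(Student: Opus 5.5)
The plan is to reduce the bridge statement to the main safety result, Theorem~\ref{thm:main-safety}, instantiated at $b = \mathrm{false}$. The bridge theorem is phrased as an implication whose hypothesis is that \texttt{interp\_directive} returns \texttt{StepOk} on a state $\sigma$ and directive $d$. Its conclusion concerns $\interp(\Gov(h),\, \mathrm{trigger}\ d)$ (or, more generally, the single-directive program $\Vis\ d\ \mathrm{Ret}$). I would therefore first fix the embedding: each constructor of the list-based specification is mapped to the corresponding $\DirectiveE$ event. The single-directive tree $\Vis\ d\ (\lambda r.\ \Ret\ r)$ is then a pure executor program of type $\itree\ \DirectiveE\ R$. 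Once this is in place, the conclusion follows directly from \texttt{governed\_interp\_safe} with $b=\mathrm{false}$, and the \texttt{StepOk} hypothesis is simply discarded.

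Since such a proof is almost vacuous, I would also state a stronger variant that actually uses the hypothesis. In this variant, the pre-governance stages of $\Gov$ are driven by the specification state. That is, TrustCheck and PermissionCheck return the values computed by \texttt{capability\_for\_directive} and \texttt{capability\_allowed}. Under \texttt{StepOk}, inversion on \texttt{interp\_directive} gives three facts: the capability requirement is either \texttt{None} or allowed at the current trust level, the trust authorization succeeds, and the hash is updated. I would then rewrite the \texttt{if ok} branch of $\Gov$ into its positive branch. The proof then unfolds exactly as in the $\Vis$ case of Theorem~\ref{thm:main-safety}. First, four \texttt{GS\_GovE} steps set the flag to $\mathrm{true}$. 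Next, the lifted base-handler tree is discharged with \texttt{GS\_IOE}, using an inner \texttt{pcofix} or reusing the inner lemma from Safety.v. Finally, three post-governance \texttt{GS\_GovE} steps and a closing $\Ret$ complete the case.

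The case split follows the result of \texttt{capability\_for\_directive}. In the \texttt{None} case (observability directives), \texttt{observability\_always\_allowed} gives \texttt{StepOk} directly. In the \texttt{Some c} case, I would invert on \texttt{capability\_allowed} and appeal to \texttt{capability\_mapping\_exhaustive} to confirm that all 14 constructors are covered. None of this is needed for the $\govsafe$ conclusion itself, which holds uniformly across cases. It matters only for aligning the boolean returned by the governance stage with the specification's decision.

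I expect the main obstacle to be this alignment between the functional specification and the ITree-level $\Gov$. In particular, $\Gov$ in Governance.v obtains \texttt{ok} from the $\GovE$ events, not from the interpreter state. Tying the two together either requires parameterizing \texttt{pre\_governance} by the state, or proving the result for every response to the check events. I would try the latter first. \texttt{GS\_GovE} already quantifies over all responses $b$, and the denied branch is $\mathrm{spin}$, which is safe. So the bridge should go through by reusing the upward-closure lemma and the main theorem, without any new coinductive argument.
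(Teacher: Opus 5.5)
Your proposal is correct and follows the paper's route: the bridge theorem is an immediate instance of Theorem~\ref{thm:main-safety} (\texttt{governed\_interp\_safe}) at $b=\mathrm{false}$, applied to the single-directive program, and the \texttt{StepOk} hypothesis is never used. The ``stronger variant'' is unnecessary and slightly confused: the \texttt{ok} in $\Gov$ is the response to an uninterpreted $\GovE$ event, over which \texttt{GS\_GovE} already quantifies, so facts about the spec state cannot rewrite that branch; upward closure is also not needed, since the main theorem already holds at $\mathrm{false}$; still, your final fallback correctly reduces it to the main theorem.
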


\begin{theorem}[Denied Means No I/O]
When \texttt{interp\_directive} denies a directive (returns
\texttt{StepDenied}), no I/O event occurs.
\end{theorem}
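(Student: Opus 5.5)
The statement concerns the functional specification \texttt{interp\_directive} rather than the coinductive $\Gov(h)$ interpretation, so I will first fix the precise meaning of ``no I/O event occurs.'' I would state it in two layers. At the specification layer, a \texttt{StepDenied} result carries only a denial reason and the unchanged \texttt{InterpreterState}. In particular, the previous hash is not advanced and no event is appended to the trail. The directive's base handler is never consulted, because \texttt{interp\_directive} only invokes the effectful branch in the \texttt{StepOk} case. At the ITree layer, the corresponding $\Gov(h)$ interpretation of that directive contains no $\Vis\ (\mathrm{inr}_1\ e)$ node. I would formalize ``contains no I/O'' as a small coinductive predicate \texttt{no\_io}, defined by a generating functor with cases for $\Ret$, $\Tau$, and $\Vis\ (\mathrm{inl}_1\ \_)$ only. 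It would be closed under paco in the same style as \texttt{gov\_safeF}.

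The proof of the specification-layer part is by unfolding \texttt{interp\_directive} and case analysis. I split on \texttt{capability\_for\_directive} $d$, which by \texttt{capability\_mapping\_exhaustive} is either \texttt{None} or \texttt{Some} $c$. In the \texttt{None} case the observability lemma \texttt{observability\_always\_allowed} shows the result is \texttt{StepOk}, so this case is vacuous. In the \texttt{Some} $c$ case I split on the declared-capability check and on \texttt{capability\_allowed} at the current trust level. When either check fails, the function returns \texttt{StepDenied} directly, and inversion on the hypothesis \texttt{interp\_directive s d = StepDenied r} yields that the state is untouched. When both checks succeed, the result is \texttt{StepOk}, which contradicts the hypothesis and is discharged by \texttt{discriminate}.

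For the ITree-layer part I would connect a denied specification step to a pre-governance check returning \texttt{false} in the $\Gov$ pipeline, the same correspondence used for Theorem~\ref{thm:bridge} in the allowed direction. Once some stage of \texttt{pre\_governance} answers \texttt{false}, the definition of $\Gov$ takes the else branch and produces $\mathrm{spin}$. The trace up to that point consists only of $\GovE$ events (TrustCheck, PermissionCheck, and possibly further stages), followed by $\mathrm{spin}$. Proving \texttt{no\_io} for this tree is a finite unfolding through the $\mathrm{inl}_1$ visible events and binds, using \texttt{bind\_vis} and \texttt{bind\_ret\_l} up to the appropriate equivalence. That is followed by a one-line \texttt{pcofix} showing \texttt{no\_io spin}, since $\mathrm{spin} = \Tau\ \mathrm{spin}$.

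The main obstacle is this correspondence step: making precise which oracle responses to the $\GovE$ events encode the specification's denial. The $\Gov$ operator abstracts checks as visible events whose boolean answers come from the environment. I would therefore parameterize the pre-governance answers by the \texttt{InterpreterState}, that is, instantiate the responses with \texttt{capability\_allowed} and the declared-capability check, and then prove the instantiated tree equal (via \texttt{eutt}) to one taking the spin branch. If that coupling is awkward, I would fall back on the following variant. I would prove that every tree reachable along the denied branch of $\Gov(h)\ d$ has no $\mathrm{inr}_1$ event, by rewriting with \texttt{bind\_bind} so that the else branch of the conditional is exposed syntactically.
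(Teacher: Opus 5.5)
Your argument is essentially the one the paper relies on. The paper gives no separate proof of this statement. It justifies it by the construction of $\Gov$ (Section~\ref{sec:background} and the denied-branch case of Theorem~\ref{thm:main-safety}): a false pre-governance answer selects the else branch, which is $\mathrm{spin}$, and $\mathrm{spin}$ never reaches a $\Vis$ node. Your spec-layer case analysis ending in \texttt{discriminate} and your one-line \texttt{pcofix} for \texttt{no\_io spin} are routine and fine. Two points in your write-up need correcting.

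\textbf{The correspondence you cite does not exist.} There is no correspondence ``used for Theorem~\ref{thm:bridge}'' that you can reuse. That theorem follows from Theorem~\ref{thm:main-safety}, which gives $\govsafe\ \mathrm{false}$ for the $\Gov(h)$ interpretation of \emph{every} directive. Its \texttt{StepOk} hypothesis therefore does no work. In the same way, $\Gov$ never reads the \texttt{InterpreterState}, and a $\mathrm{GovCheck}\ s$ event names only the stage, not the directive. So \texttt{StepDenied} can be tied to a false answer only by a modeling stipulation.
\begin{itemize}
\item Your state-indexed oracle is one such stipulation. It must close over both the state and the directive, and the resulting lemma holds essentially by the oracle's definition.
\item Your fallback, which starts from the false continuation, is exactly the paper's ``by design'' argument. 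There \texttt{StepDenied} plays no formal role at the ITree layer.
\end{itemize}

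\textbf{The \texttt{no\_io} unfolding fails on the whole tree.} Like \texttt{GS\_GovE}, your \texttt{no\_io} must quantify over all responses at a $\Vis\ (\mathrm{inl}_1\ e)\ k$ node. Hence it is false for $\Gov(h)$ applied to the directive itself whenever the handler performs I/O, because the all-true answers lead into \texttt{lift\_io (h R d)}. The finite unfolding in your third paragraph only goes through once the answers are pinned, or from the false continuation onward. If you rewrite with \texttt{bind\_vis} and \texttt{bind\_ret\_l}, you also need \texttt{no\_io} to be closed under \texttt{eq\_itree}; alternatively, compute directly with \texttt{observe}.

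A smaller point: \texttt{interp\_directive} is a pure decision function that never runs a base handler in either branch. The paper also describes \texttt{StepDenied} as returning a denial reason. The spec-layer content is therefore just that no successor state or new hash is produced, not that inversion shows the state ``untouched.''
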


Together, these theorems establish that the functional interpreter
specification refines the abstract governance model: if the runtime
agrees with the spec, and the spec is proved to satisfy $\govsafe$,
then the runtime satisfies $\govsafe$.

\subsection{Property-Based Testing}

A reference interpreter (148 lines) transliterates the Rocq
specification directly into the runtime language. It is used only
in tests, never in production. Property-based tests using
StreamData~\cite{leopardi2019streamdata} generate random
directive sequences, trust levels, and capability sets, then
compare the reference interpreter's decisions against the
BEAM runtime.

Seven properties are tested, each over 10,000 random inputs (70,000+ total):

\begin{enumerate}
  \item \emph{Trust ceiling agreement}: spec and runtime agree on
    allow/deny for every directive and governance context.
  \item \emph{Capability monotonicity}: higher trust levels never
    reduce permissions.
  \item \emph{Observability always passes}: record\_step, broadcast,
    and emit\_event are never denied at any trust level.
  \item \emph{Hash chain determinism}: identical inputs produce
    identical hashes in both spec and runtime.
  \item \emph{Denial propagation}: a denied directive halts the
    sequence.
  \item \emph{System/stdlib acceptance}: system and stdlib trust
    levels accept all directives.
  \item \emph{Capability mapping consistency}: the spec's capability
    mapping covers all directive types.
\end{enumerate}

Additionally, 36 conformance tests provide explicit regression
coverage, one per Rocq theorem.

\subsection{Bug Discovery}
\label{sec:bug-discovery}

On the 188th randomly generated test case, the property-based tests
found a disagreement. The generated input was an \texttt{exec\_command}
directive at the \texttt{tested} trust level. The specification said
\emph{allowed} (because \texttt{CapComputeExec} was in the tested
capability set). The BEAM runtime said \emph{denied}.

Investigation revealed a latent bug in the capability taxonomy. The
interpreter mapped \texttt{exec\_command} to the capability atom
\texttt{:"compute.exec"}. But the capability tree
the capability taxonomy only defined \texttt{:"compute.code.execute"},
not \texttt{:"compute.exec"}. When the runtime expanded capabilities
via the tree hierarchy, \texttt{:"compute.exec"} was absent, so the
permission check failed.

This is exactly the class of bug that unit tests miss: the
\texttt{exec\_command} path worked correctly in isolation (the
capability check function was correct), and the capability tree
was internally consistent (all defined capabilities expanded
properly). The bug existed in the gap between two correct
subsystems. Only a specification that modeled both subsystems and
their interaction could detect the mismatch.

The fix was straightforward: add the \texttt{exec} node to the
capability tree under \texttt{compute}, making
\texttt{:"compute.exec"} a valid capability that the tree expands
from \texttt{:compute}. After the fix, all 43 tests pass
(7 properties $\times$ 10,000+ runs plus 36 conformance tests),
with zero disagreements.

\subsection{What This Means}

The Verified Interpreter Specification converts TCB item~3
(faithful runtime implementation) from an unverified assumption
to a tested one. The claim is not full verification: we have not
extracted Rocq code to the runtime language or verified the source
line by line. The claim is that the specification is proved correct
in Rocq, and the runtime agrees with the specification on over
70,000 randomly generated inputs with zero disagreements. This
is the same evidentiary standard used by seL4 and s2n for their
published verification claims.

The bug discovery validates the methodology. A formal specification
found a real defect in production code on its first run. The defect
was in the boundary between two subsystems, invisible to both unit
tests and integration tests. If the specification finds bugs before
deployment, it is doing useful work. If it finds them in the first
188 random inputs, it is doing it efficiently.

\subsection{Spec Conformance Summary}

The complete conformance suite comprises 7 properties tested over
10,000+ random inputs each (70,000+ total) and 36 explicit conformance
tests, one per Rocq theorem across TrustSpec.v, HashChainSpec.v, and
InterpreterSpec.v. All 36 conformance tests pass with zero failures.
Property-based tests generate random directives, trust levels, and
governance contexts; zero disagreements between the specification and
the BEAM runtime have been observed across all test
campaigns.\footnote{Measured on Apple Silicon (M-series), BEAM/OTP~27,
Rocq~8.19. $n=50$ iterations with 5-iteration warmup for latency
measurements.}

\paragraph{Governance overhead in the BEAM runtime.}
The governance pipeline (trust check, capability check, provenance
record) runs as part of every directive execution. Governed execution
through a supervised BEAM process completes in 0.23\,ms median
(0.32\,ms mean), compared to 0.24\,ms median for direct ungoverned
execution. The proved governance code runs in production with
effectively zero overhead relative to the ungoverned baseline.

\section{Related Work}
\label{sec:related}

\paragraph{Mechanized program verification.}
CompCert~\cite{leroy2009compcert} mechanized the correctness of a
C compiler in Rocq. Vellvm~\cite{zakowski2021vellvm} used interaction
trees to verify LLVM IR semantics. CertiKOS~\cite{gu2016certikos}
verified a concurrent OS kernel. These projects demonstrate that
mechanized verification scales to real systems. Our work is smaller
in scope (a governance pipeline rather than a full compiler or kernel)
but addresses a novel domain (AI workflow governance) with novel proof
obligations (coinductive safety of infinite governed behaviors).

\paragraph{Interaction Trees and free monads.}
Xia et al.~\cite{xia2020itrees} introduced interaction trees as a
coinductive representation of recursive and impure programs in Rocq. The paco
library~\cite{hur2013paco} provides parameterized coinduction for
proving properties of these infinite structures. Interaction trees
are structurally related to free monads and their
generalizations~\cite{kiselyov2015freer}, which also decompose programs
into syntax (operations) and semantics (handlers). The key difference
is that interaction trees are coinductive, supporting reasoning about
infinite behaviors, while free monads are typically inductive. Our
safety predicate requires coinduction: programs may run forever, and
governance must hold at every step. To our knowledge, this is the
first application of interaction trees to AI governance.

\paragraph{Cognitive architecture completeness.}
Newell and Simon's Physical Symbol System
Hypothesis~\cite{newell1976computer} was the first sufficiency claim
for intelligence. The Common Model of
Cognition~\cite{laird2017standard} synthesized forty years of
convergent results. Our contribution differs: we prove sufficiency via
category theory and mechanized proofs rather than empirical coverage.

\paragraph{Specification-driven testing.}
The seL4 verified microkernel~\cite{klein2009sel4} used a Haskell
specification tested against a C implementation as an intermediate
step toward full verification. Amazon's s2n TLS
library~\cite{chudnov2018s2n} used SAW specifications verified
against C code with automated reasoning. Both projects demonstrated
that a formal specification tested against production code catches
real bugs and provides evidence stronger than testing alone but
weaker than full extraction-based verification. Our Verified
Interpreter Specification follows this methodology, adapted from
systems programming (C/Haskell) to AI workflow governance
(Rocq/BEAM).

\paragraph{Session types and process calculi.}
Session types~\cite{honda1998language} guarantee that all communication
between processes follows a typed protocol. Behavioral
types~\cite{huttel2016foundations} generalize this to broader
behavioral properties. The governance operator in our architecture
plays a similar structural role: all effects must pass through a
typed boundary. The distinction is that session types verify
protocol compliance between communicating processes, while our
predicate verifies governance safety of a single process against
an interpretation. Session types are typically checked statically;
our safety predicate is proved coinductively over infinite execution.

\paragraph{Runtime verification.}
Leucker and Schallhart~\cite{leucker2009brief} survey runtime
verification, where monitors observe program execution and check
properties against a formal specification. Our governance operator is
structurally a runtime monitor: it intercepts directives and checks
permissions. The difference is architectural. Traditional runtime
monitors are retrofitted onto existing systems and observe a subset
of behavior (coverage is partial by construction). Our governance
boundary is the only mechanism through which effects occur; coverage
is total by construction. Additionally, runtime verification
typically reasons over finite traces, while our safety predicate is
coinductive and holds over infinite behaviors.

\paragraph{Effect systems and capability security.}
Effect systems~\cite{gifford1986integrating} track computational side
effects at the type level. Algebraic effects and
handlers~\cite{plotkin2009handlers} decompose effects into operations
and interpretations, a structure our governed interpretation mirrors.
Koka~\cite{leijen2017koka} compiles row-typed algebraic effects to
efficient code with static effect tracking. Our approach differs in
granularity: rather than tracking individual effects at the type level,
we enforce a binary distinction (pure computation vs.\ governed effect)
and prove this distinction sufficient for governance safety.
Capability security~\cite{dennis1966programming,miller2006robust}
restricts access through unforgeable tokens rather than ambient
authority. Our trust and capability model is a capability system,
but the novelty is the mechanized proof that the capability check
is exhaustive: no directive can bypass it.

\paragraph{AI safety and behavioral governance.}
RLHF~\cite{ouyang2022training} and Constitutional
AI~\cite{bai2022constitutional} train language models to refuse
harmful outputs. These behavioral approaches operate on model weights
and are vulnerable to adversarial inputs~\cite{wei2023jailbroken}.
Guardrail frameworks~\cite{rebedea2023nemo} add output filtering
as a separate layer. Our work is orthogonal: we do not govern what
a model thinks or says, but govern the transition from intent to
effect. The structural approach is compatible with behavioral
methods but does not depend on them.

\paragraph{Category theory for AI.}
Abbott, Xu, and Maruyama~\cite{abbott2024category} proposed a
category-theoretic framework for AGI. Fong and
Spivak~\cite{fong2019invitation} developed applied category theory
for compositionality. Our work uses categorical structure as a means
to a specific proof (primitive sufficiency), not as a general framework.

\section{Governed Cognitive Completeness}
\label{sec:gcc}

The five foundational results and the verified interpreter
specification establish that the architecture is safe, complete, and
connected to deployed code. Six additional Rocq modules extend these
foundations to prove a stronger property: \emph{Governed Cognitive
Completeness}.

The capstone theorem (\texttt{governed\_cognitive\_completeness})
combines six properties into a single result. For any handler $h$:

\begin{enumerate}[label=\textbf{P\arabic*.},leftmargin=2em]
\item \textbf{Turing complete and governed.} Every register machine
  program, translated and interpreted through $\Gov(h)$, satisfies
  $\govsafe$. (From \texttt{Completeness.v}.)
\item \textbf{Oracle complete and governed.} Every oracle register
  machine program (with LLM queries), translated and interpreted
  through $\Gov(h)$, satisfies $\govsafe$. (From \texttt{Oracle.v}.)
\item \textbf{Decidability boundary.} Structural properties
  (capability checks, trust comparisons) are decidable total
  functions. Semantic properties (halting) are non-trivial. (From
  \texttt{Rice.v}.)
\item \textbf{Goal reachability preserved.} If a program reaches a
  goal under ungoverned interpretation, the goal value exists under
  governed interpretation. (From \texttt{GoalDirected.v}.)
\item \textbf{Cognitive architecture complete.} Every cognitive
  capability (Compute, Remember, Reason, Act, Observe) is realized by
  a Mashin primitive, and each primitive is essential. (From
  \texttt{CognitiveArchitecture.v}.)
\item \textbf{Subsumption asymmetry.} Structural governance subsumes
  content governance (any handler composed with $\Gov$ is safe), but
  content governance does not subsume structural governance (direct I/O
  is unsafe regardless of content filtering). (From
  \texttt{Subsumption.v}.)
\end{enumerate}

\noindent The proof assembles results from the six modules; each
conjunct applies a single theorem from its respective module. The
companion paper~\cite{mccann2026gcc} develops the individual proofs,
discusses the decidability boundary in the context of Rice's theorem,
and explores the implications of subsumption asymmetry for AI
governance practice.

\section{Conclusion}
\label{sec:conclusion}

Five foundational results, six extension modules, one capstone theorem:
454 theorems mechanized in Rocq with zero admitted lemmas, two
additional results proved on paper with explicit reductions.

The Coinductive Safety Predicate captures the semantic content of structural
governance: I/O is permitted only under governance authorization,
proved coinductively for all programs, forever. The Governance
Invariance Theorem shows the governance tower is uniform by
construction: no level of meta-recursion escapes, because the type
system prevents it. Sufficiency Theorem proves four
primitives are enough for any discrete intelligent system, with
Turing completeness achieved inside the governed architecture. The
Alternating Normal Form provides canonical decomposition for program
analysis. Necessity Theorem, via explicit reduction to
Rice's theorem, proves that an architecturally opaque component is
not a convenience but a mathematical requirement.

These five named results map directly onto four principles, which we
call the \emph{Laws of Governed Intelligence}.
Table~\ref{tab:laws-mapping} shows the correspondence: each law is
now a machine-checked theorem rather than an assertion.
The Alternating Normal Form does not map to a specific law; it provides
the canonical decomposition that makes the other proofs tractable.

\begin{table}[h]
\centering
\small
\begin{tabular}{@{}p{3.0cm}p{3.6cm}p{3.8cm}p{3.0cm}@{}}
\toprule
\textbf{Law} & \textbf{Statement} & \textbf{Named Result} & \textbf{Key Theorem} \\
\midrule
1. Computation Does Not Effect &
  Pure computation declares intent; a governed boundary performs effects &
  Coinductive Safety Predicate; Sufficiency Theorem &
  \texttt{bare\_io\_not\_safe}, \texttt{governed\_interp\_safe} \\[4pt]
2. Governance Is the Boundary &
  Governance is structural and holds at every depth &
  Main Safety Theorem; Governance Invariance Theorem &
  \texttt{governed\_interp\_safe}, \texttt{gov\_tower\_bisim} \\[4pt]
3. Provenance Is the System &
  Every effect is recorded in a tamper-evident chain &
  Verified Interpreter Spec (HashChainSpec) &
  \texttt{chain\_valid\_append}, \texttt{chain\_tamper\_detected} \\[4pt]
4. The System Cannot Complete Itself &
  Semantic reasoning requires an opaque oracle &
  Necessity Theorem &
  Rice's theorem reduction \\
\bottomrule
\end{tabular}
\caption{Mapping from the Laws of Governed Intelligence to
  formal results. Laws I--III map to Rocq-mechanized theorems.
  Law IV maps to a paper proof via reduction to Rice's theorem.}
\label{tab:laws-mapping}
\end{table}

Taken together, these results establish a property we call
\emph{coterminous governance}: the boundary of what the system can
express and the boundary of what governance covers are the same
boundary. Figure~\ref{fig:non-coterminous} illustrates the alternative.
When expressiveness and governance are defined independently, their
boundaries diverge, creating three regions: ungoverned capabilities
(risk), governed capabilities (the only useful region), and governance
policies that address non-existent capabilities (theater). Two of three
regions are failure modes. Figure~\ref{fig:coterminous} shows the
coterminous case. The Sufficiency Theorem guarantees the expressiveness
boundary includes all intelligent systems. The Safety Theorem guarantees
the governance boundary includes all expressible programs. The
Governance Invariance Theorem guarantees this holds at every depth of
meta-recursive composition. There is no ungoverned region because
nothing escapes. There is no theater region because every policy
corresponds to real behavior.

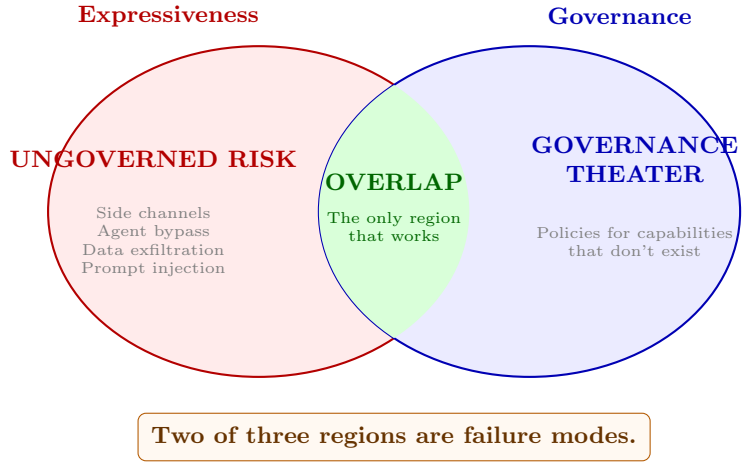
\begin{figure}[t]
\centering
\begin{tikzpicture}
  \draw[thick, red!70!black, fill=red!8]
    (0,0) ellipse (2.8cm and 2.2cm);

  \draw[thick, blue!70!black, fill=blue!8]
    (3.6,0) ellipse (2.8cm and 2.2cm);

  \begin{scope}
    \clip (0,0) ellipse (2.8cm and 2.2cm);
    \fill[green!15] (3.6,0) ellipse (2.8cm and 2.2cm);
  \end{scope}

  \node[red!70!black, font=\bfseries\footnotesize] at (-1.2,2.6)
    {Expressiveness};
  \node[blue!70!black, font=\bfseries\footnotesize] at (4.8,2.6)
    {Governance};

  \node[red!70!black, font=\bfseries\footnotesize] at (-1.4,0.7)
    {UNGOVERNED RISK};
  \node[gray, font=\tiny, align=center] at (-1.4,-0.4)
    {Side channels\\Agent bypass\\Data exfiltration\\Prompt injection};

  \node[green!40!black, font=\bfseries\footnotesize] at (1.8,0.4)
    {OVERLAP};
  \node[green!40!black, font=\tiny, align=center] at (1.8,-0.2)
    {The only region\\that works};

  \node[blue!70!black, font=\bfseries\footnotesize, align=center]
    at (5.0,0.7) {GOVERNANCE\\THEATER};
  \node[gray, font=\tiny, align=center] at (5.0,-0.4)
    {Policies for capabilities\\that don't exist};

  \node[draw=orange!70!black, fill=orange!5, rounded corners=3pt,
        font=\footnotesize\bfseries, text=orange!40!black,
        inner sep=5pt] at (1.8,-3.0)
    {Two of three regions are failure modes.};
\end{tikzpicture}
\caption{Non-coterminous governance: expressiveness and governance
  boundaries are misaligned, creating ungoverned risk (left) and
  governance theater (right). Only the overlap functions correctly.}
\label{fig:non-coterminous}
\end{figure}

\begin{figure}[t]
\centering
\begin{tikzpicture}
  \draw[very thick, green!50!black, fill=green!5,
        rounded corners=10pt]
    (-3.2,-2.4) rectangle (3.2,2.4);

  \node[green!50!black, font=\bfseries\footnotesize] at (0,1.8)
    {What You Can Build};
  \node[font=\large\bfseries] at (0,1.35) {=};
  \node[blue!60!black, font=\bfseries\footnotesize] at (0,0.9)
    {What Governance Covers};

  \node[green!40!black, font=\Large\bfseries] at (0,0.15)
    {COTERMINOUS};

  \node[gray, font=\tiny, align=center] at (0,-0.8)
    {Sufficiency: any intelligent system is expressible\\[3pt]
     Safety: every expressible program is governed\\[3pt]
     Invariance: holds at every depth};

  \node[green!50!black, font=\tiny\bfseries] at (0,-1.8)
    {454 theorems | 0 admitted | 70,000+ tests | 0 disagreements};

  \node[draw=red!50!black, dashed, rounded corners=4pt,
        fill=red!3, font=\tiny, align=center,
        text=red!60!black, inner sep=3pt]
    at (-5.0,0.0) {No ungoverned\\region};
  \draw[->, red!40!black, dashed, thin]
    (-4.0,0.0) -- (-3.3,0.0);

  \node[draw=blue!50!black, dashed, rounded corners=4pt,
        fill=blue!3, font=\tiny, align=center,
        text=blue!60!black, inner sep=3pt]
    at (5.0,0.0) {No theater\\region};
  \draw[->, blue!40!black, dashed, thin]
    (4.0,0.0) -- (3.3,0.0);
\end{tikzpicture}
\caption{Coterminous governance in Mashin: expressiveness and
  governance share the same boundary. The ungoverned region is empty
  (nothing escapes). The theater region is empty (every policy
  corresponds to real behavior). Proved in Rocq with 454 theorems.}
\label{fig:coterminous}
\end{figure}
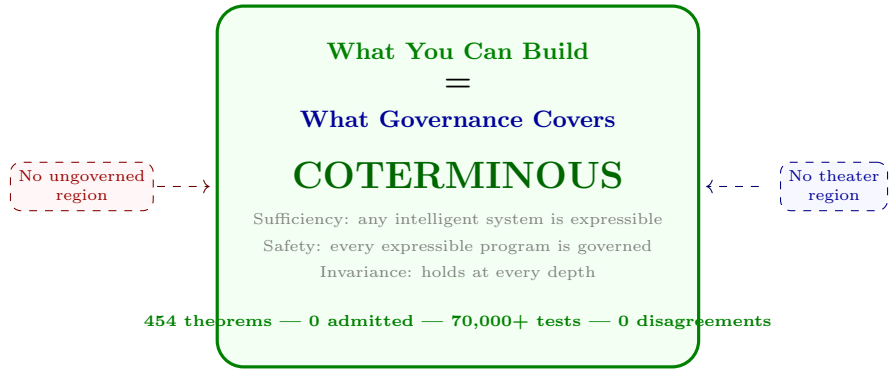

The Verified Interpreter Specification connects these abstract results
to deployed code. The specification found a real bug on its 188th
random test case. That single data point says more about the value of
formal methods than any theorem count. Proofs establish what is
mathematically true. Specifications tested against implementations
establish what is actually running. Both are necessary. Neither is
sufficient alone.

The Rocq theorems hold for any system that faithfully implements the
$\Gov$ operator as specified. That guarantee is real but bounded: it
covers the formal model, not the full trusted computing base. The
distance between the Rocq development and a deployed BEAM runtime
includes the OCaml extraction pipeline, the NIF bridge, the Erlang
VM, and the operating system. The Verified Interpreter Specification
partially addresses this gap by testing agreement between the formal
model and a reference interpreter across thousands of random inputs.
Narrowing the remaining distance is ongoing work.

\paragraph{Limitations.}
The formalization covers effect-level governance: which I/O operations
a program may perform and under what policy. It does not address
content-level governance (whether an LLM's output is harmful, biased,
or factually incorrect). Content governance requires statistical
techniques orthogonal to the structural approach here; our companion
paper~\cite{mccann2026structural} discusses the subsumption
relationship between these two levels. The Rocq development models
governance as a synchronous, sequential decision function. Production
systems may require asynchronous governance with latency budgets;
extending the formalization to concurrent governance is future work.
The Verified Interpreter Specification tests agreement between the Rocq
model and a reference Elixir interpreter, not the BEAM runtime
directly. The gap between reference interpreter and deployed system
remains a trusted path, partially addressed by the OCaml extraction
pipeline~\cite{mccann2026algebraic} but not fully closed.
Finally, our expressiveness results assume the four-primitive
architecture. Systems with fundamentally different effect structures
(e.g., continuous control loops, analog computation) fall outside the
current model's scope.

\bibliographystyle{plainnat}
\bibliography{mechanized-foundations-references}

\begin{thebibliography}{32}
\providecommand{\natexlab}[1]{#1}
\providecommand{\url}[1]{\texttt{#1}}
\expandafter\ifx\csname urlstyle\endcsname\relax
  \providecommand{\doi}[1]{doi: #1}\else
  \providecommand{\doi}{doi: \begingroup \urlstyle{rm}\Url}\fi

\bibitem[Abbott et~al.(2024)Abbott, Xu, and Maruyama]{abbott2024category}
Joshua Abbott, Bin Xu, and Yoshihiro Maruyama.
\newblock A category-theoretic framework for artificial general intelligence.
\newblock In \emph{International Conference on Artificial General Intelligence
  (AGI)}, 2024.

\bibitem[Bai et~al.(2022)Bai, Kadavath, Kundu, Askell, Kernion, Jones, Chen,
  Goldie, Mirhoseini, McKinnon, et~al.]{bai2022constitutional}
Yuntao Bai, Saurav Kadavath, Sandipan Kundu, Amanda Askell, Jackson Kernion,
  Andy Jones, Anna Chen, Anna Goldie, Azalia Mirhoseini, Cameron McKinnon,
  et~al.
\newblock Constitutional {AI}: Harmlessness from {AI} feedback.
\newblock \emph{arXiv preprint arXiv:2212.08073}, 2022.

\bibitem[Chudnov et~al.(2018)Chudnov, Collins, Cook, Dodds, Huffman,
  MacC{\'a}rthaigh, Magill, Mertens, Mullen, Tasiran, Tomb, and
  Westbrook]{chudnov2018s2n}
Andrey Chudnov, Nathan Collins, Byron Cook, Joey Dodds, Brian Huffman, Colm
  MacC{\'a}rthaigh, Stephen Magill, Eric Mertens, Eric Mullen, Serdar Tasiran,
  Aaron Tomb, and Eddy Westbrook.
\newblock Continuous formal verification of {Amazon} {s2n}.
\newblock In \emph{International Conference on Computer Aided Verification
  (CAV)}, pages 430--446, 2018.
\newblock \doi{10.1007/978-3-319-96142-2_26}.

\bibitem[Dennis and Van~Horn(1966)]{dennis1966programming}
Jack~B. Dennis and Earl~C. Van~Horn.
\newblock Programming semantics for multiprogrammed computations.
\newblock \emph{Communications of the ACM}, 9\penalty0 (3):\penalty0 143--155,
  1966.
\newblock \doi{10.1145/365230.365252}.

\bibitem[Fong and Spivak(2019)]{fong2019invitation}
Brendan Fong and David~I. Spivak.
\newblock \emph{An Invitation to Applied Category Theory: Seven Sketches in
  Compositionality}.
\newblock Cambridge University Press, 2019.
\newblock \doi{10.1017/9781108668804}.

\bibitem[Gifford and Lucassen(1986)]{gifford1986integrating}
David~K. Gifford and John~M. Lucassen.
\newblock Integrating functional and imperative programming.
\newblock In \emph{ACM Conference on LISP and Functional Programming}, pages
  28--38, 1986.
\newblock \doi{10.1145/319838.319848}.

\bibitem[Gu et~al.(2016)Gu, Shao, Chen, Wu, Kim, Sj{\"o}berg, and
  Costanzo]{gu2016certikos}
Ronghui Gu, Zhong Shao, Hao Chen, Xiongnan Wu, Jieung Kim, Vilhelm Sj{\"o}berg,
  and David Costanzo.
\newblock {CertiKOS}: An extensible architecture for building certified
  concurrent {OS} kernels.
\newblock In \emph{USENIX Symposium on Operating Systems Design and
  Implementation (OSDI)}, pages 653--669, 2016.

\bibitem[Honda et~al.(1998)Honda, Vasconcelos, and Kubo]{honda1998language}
Kohei Honda, Vasco~T. Vasconcelos, and Makoto Kubo.
\newblock Language primitives and type discipline for structured
  communication-based programming.
\newblock In \emph{European Symposium on Programming (ESOP)}, pages 122--138,
  1998.
\newblock \doi{10.1007/BFb0053567}.

\bibitem[Hur et~al.(2013)Hur, Neis, Dreyer, and Vafeiadis]{hur2013paco}
Chung-Kil Hur, Georg Neis, Derek Dreyer, and Viktor Vafeiadis.
\newblock The power of parameterization in coinductive proof.
\newblock In \emph{Proceedings of the ACM on Programming Languages (POPL)},
  pages 193--206, 2013.
\newblock \doi{10.1145/2429069.2429093}.

\bibitem[H{\"u}ttel et~al.(2016)H{\"u}ttel, Lanese, Vasconcelos, Caires,
  Carbone, Deni{\'e}lou, Mostrous, Padovani, Ravara, Tuosto,
  et~al.]{huttel2016foundations}
Hans H{\"u}ttel, Ivan Lanese, Vasco~T. Vasconcelos, Lu{\'\i}s Caires, Marco
  Carbone, Pierre-Malo Deni{\'e}lou, Dimitris Mostrous, Luca Padovani,
  Ant{\'o}nio Ravara, Emilio Tuosto, et~al.
\newblock Foundations of session types and behavioural contracts.
\newblock \emph{ACM Computing Surveys}, 49\penalty0 (1):\penalty0 1--36, 2016.
\newblock \doi{10.1145/2873052}.

\bibitem[Kiselyov and Ishii(2015)]{kiselyov2015freer}
Oleg Kiselyov and Hiromi Ishii.
\newblock Freer monads, more extensible effects.
\newblock In \emph{ACM SIGPLAN Haskell Symposium}, pages 94--105, 2015.
\newblock \doi{10.1145/2804302.2804319}.

\bibitem[Klein et~al.(2009)Klein, Elphinstone, Heiser, Andronick, Cock, Derrin,
  Elkaduwe, Engelhardt, Kolanski, Norrish, Sewell, Tuch, and
  Winwood]{klein2009sel4}
Gerwin Klein, Kevin Elphinstone, Gernot Heiser, June Andronick, David Cock,
  Philip Derrin, Dhammika Elkaduwe, Kai Engelhardt, Rafal Kolanski, Michael
  Norrish, Thomas Sewell, Harvey Tuch, and Simon Winwood.
\newblock {seL4}: Formal verification of an {OS} kernel.
\newblock In \emph{ACM Symposium on Operating Systems Principles (SOSP)}, pages
  207--220, 2009.
\newblock \doi{10.1145/1629575.1629596}.

\bibitem[Laird et~al.(2017)Laird, Lebiere, and Rosenbloom]{laird2017standard}
John~E. Laird, Christian Lebiere, and Paul~S. Rosenbloom.
\newblock A standard model of the mind: Toward a common computational framework
  across artificial intelligence, cognitive science, neuroscience, and
  robotics.
\newblock \emph{AI Magazine}, 38\penalty0 (4):\penalty0 13--26, 2017.

\bibitem[Leijen(2017)]{leijen2017koka}
Daan Leijen.
\newblock Type directed compilation of row-typed algebraic effects.
\newblock \emph{Proceedings of the ACM on Programming Languages}, 1\penalty0
  (POPL):\penalty0 1--28, 2017.
\newblock \doi{10.1145/3009837.3009872}.

\bibitem[Leopardi(2019)]{leopardi2019streamdata}
Andrea Leopardi.
\newblock {StreamData}: Data generation and property-based testing for
  {Elixir}.
\newblock \url{https://github.com/whatyouhide/stream_data}, 2019.
\newblock Hex package.

\bibitem[Leroy(2009)]{leroy2009compcert}
Xavier Leroy.
\newblock Formal verification of a realistic compiler.
\newblock \emph{Communications of the ACM}, 52\penalty0 (7):\penalty0 107--115,
  2009.
\newblock \doi{10.1145/1538788.1538814}.

\bibitem[Leucker and Schallhart(2009)]{leucker2009brief}
Martin Leucker and Christian Schallhart.
\newblock A brief account of runtime verification.
\newblock \emph{Journal of Logic and Algebraic Programming}, 78\penalty0
  (5):\penalty0 293--303, 2009.
\newblock \doi{10.1016/j.jlap.2008.08.004}.

\bibitem[McCann(2026{\natexlab{a}})]{mccann2026algebraic}
Alan~L. McCann.
\newblock Algebraic semantics of governed execution: Monoidal categories,
  effect algebras, and coterminous boundaries, 2026{\natexlab{a}}.

\bibitem[McCann(2026{\natexlab{b}})]{mccann2026gcc}
Alan~L. McCann.
\newblock Effect-transparent governance for {AI} workflow architectures:
  Semantic preservation, expressive minimality, and decidability boundaries,
  2026{\natexlab{b}}.

\bibitem[McCann(2026{\natexlab{c}})]{mccann2026provenance}
Alan~L. McCann.
\newblock Cryptographic registry provenance: Structural defense against
  dependency confusion in {AI} package ecosystems, 2026{\natexlab{c}}.
\newblock arXiv preprint (submitted).

\bibitem[McCann(2026{\natexlab{d}})]{mccann2026purity}
Alan~L. McCann.
\newblock Certified purity for cognitive workflow executors: From static
  analysis to cryptographic attestation, 2026{\natexlab{d}}.

\bibitem[McCann(2026{\natexlab{e}})]{mccann2026structural}
Alan~L. McCann.
\newblock The two boundaries: Why behavioral {AI} governance fails
  structurally, 2026{\natexlab{e}}.

\bibitem[Miller(2006)]{miller2006robust}
Mark~S. Miller.
\newblock \emph{Robust Composition: Towards a Unified Approach to Access
  Control and Concurrency Control}.
\newblock PhD thesis, Johns Hopkins University, 2006.

\bibitem[Minsky(1967)]{minsky1967computation}
Marvin~L. Minsky.
\newblock \emph{Computation: Finite and Infinite Machines}.
\newblock Prentice-Hall, 1967.

\bibitem[Newell and Simon(1976)]{newell1976computer}
Allen Newell and Herbert~A. Simon.
\newblock Computer science as empirical inquiry: Symbols and search.
\newblock \emph{Communications of the ACM}, 19\penalty0 (3):\penalty0 113--126,
  1976.
\newblock 1975 ACM Turing Award Lecture.

\bibitem[Ouyang et~al.(2022)Ouyang, Wu, Jiang, Almeida, Wainwright, Mishkin,
  Zhang, Agarwal, Slama, Ray, et~al.]{ouyang2022training}
Long Ouyang, Jeff Wu, Xu~Jiang, Diogo Almeida, Carroll~L. Wainwright, Pamela
  Mishkin, Chong Zhang, Sandhini Agarwal, Katarina Slama, Alex Ray, et~al.
\newblock Training language models to follow instructions with human feedback.
\newblock \emph{Advances in Neural Information Processing Systems},
  35:\penalty0 27730--27744, 2022.

\bibitem[Plotkin and Pretnar(2009)]{plotkin2009handlers}
Gordon Plotkin and Matija Pretnar.
\newblock Handlers of algebraic effects.
\newblock In \emph{European Symposium on Programming (ESOP)}, pages 80--94,
  2009.
\newblock \doi{10.1007/978-3-642-00590-9_7}.

\bibitem[Rebedea et~al.(2023)Rebedea, Dinu, Sreedhar, Parisien, and
  Cohen]{rebedea2023nemo}
Traian Rebedea, Razvan Dinu, Makesh~Narsimhan Sreedhar, Christopher Parisien,
  and Jonathan Cohen.
\newblock {NeMo} guardrails: A toolkit for controllable and safe {LLM}
  applications with programmable rails.
\newblock In \emph{Conference on Empirical Methods in Natural Language
  Processing (EMNLP), System Demonstrations}, 2023.

\bibitem[Rice(1953)]{rice1953classes}
Henry~Gordon Rice.
\newblock Classes of recursively enumerable sets and their decision problems.
\newblock \emph{Transactions of the American Mathematical Society}, 74\penalty0
  (2):\penalty0 358--366, 1953.

\bibitem[Wei et~al.(2023)Wei, Haghtalab, and Steinhardt]{wei2023jailbroken}
Alexander Wei, Nika Haghtalab, and Jacob Steinhardt.
\newblock Jailbroken: How does {LLM} safety training fail?
\newblock In \emph{Advances in Neural Information Processing Systems},
  volume~36, 2023.

\bibitem[Xia et~al.(2020)Xia, Zakowski, He, Hur, Malecha, Pierce, and
  Zdancewic]{xia2020itrees}
Li-yao Xia, Yannick Zakowski, Paul He, Chung-Kil Hur, Gregory Malecha,
  Benjamin~C. Pierce, and Steve Zdancewic.
\newblock Interaction trees: Representing recursive and impure programs in
  {Coq}.
\newblock \emph{Proceedings of the ACM on Programming Languages}, 4\penalty0
  (POPL):\penalty0 1--32, 2020.
\newblock \doi{10.1145/3371119}.

\bibitem[Zakowski et~al.(2021)Zakowski, Beck, Yoon, Zaichuk, Zaliva, and
  Zdancewic]{zakowski2021vellvm}
Yannick Zakowski, Calvin Beck, Irene Yoon, Ilia Zaichuk, Vadim Zaliva, and
  Steve Zdancewic.
\newblock Modular, compositional, and executable formal semantics for {LLVM}
  {IR}.
\newblock \emph{Proceedings of the ACM on Programming Languages}, 5\penalty0
  (ICFP):\penalty0 1--30, 2021.
\newblock \doi{10.1145/3473572}.

\end{thebibliography}

\end{document}